%% file: main.tex
\documentclass{article}
\usepackage{iclr2027_preview}

\usepackage{amsmath,amssymb}
\newtheorem{proposition}{Proposition}
\usepackage{booktabs,multirow,tabularx}
\usepackage{placeins}
\usepackage{graphicx}
\usepackage{float}
\usepackage{capt-of}
\usepackage[protrusion=true,expansion=true]{microtype}
\usepackage{wrapfig}
\usepackage{needspace}
\usepackage{pifont}
\usepackage{xcolor}
\usepackage{colortbl}
\definecolor{citationblue}{HTML}{5186B5}
\input{tables/results_style}
\usepackage[colorlinks=true,citecolor=citationblue,linkcolor=black,urlcolor=black]{hyperref}
\usepackage{url}
\graphicspath{{figures/}}

\title{SMAT: Simple and Efficient\\Merge-Aware Training}
\author{%
Yanggan Gu\textsuperscript{1}\thanks{Equal contribution.}\quad
Yuanyi Wang\textsuperscript{1}\footnotemark[1]\quad
Zhen Li\textsuperscript{1}\quad
Shuo Cai\textsuperscript{1}\quad
Yuhang Liu\textsuperscript{1}\\[3pt]
\bfseries Junzhuo Li\textsuperscript{2}\quad
Zihao Wang\textsuperscript{3}\quad
Hongxia Yang\textsuperscript{1,4,5}\thanks{Corresponding author.}\\[7pt]
\normalfont\small\textsuperscript{1}The Hong Kong Polytechnic University (PolyU)\\[1pt]
\normalfont\small\textsuperscript{2}The Hong Kong University of Science and Technology (Guangzhou)\\[1pt]
\normalfont\small\textsuperscript{3}The Chinese University of Hong Kong\\[1pt]
\normalfont\small\textsuperscript{4}PolyU-Daya Bay Technology and Innovation Research Institute\\[1pt]
\normalfont\small\textsuperscript{5}InfiX.ai\quad
\urlstyle{same}\textbf{Code:}~\href{https://github.com/egangu/smat}{\textcolor{citationblue}{\underline{\nolinkurl{github.com/egangu/smat}}}}%
}

\hypersetup{
  pdftitle={SMAT: Simple and Efficient Merge-Aware Training},
  pdfauthor={Yanggan Gu, Yuanyi Wang, Zhen Li, Shuo Cai, Yuhang Liu, Junzhuo Li, Zihao Wang, Hongxia Yang},
  pdfsubject={Merge-aware training},
  pdfkeywords={model merging, merge-aware training, SMAT}
}

\begin{document}
\maketitle
\suppressfloats[t]
\pagestyle{fancy}

\begin{abstract}
Model merging integrates the capabilities of multiple experts without joint
retraining, but standard expert training optimizes task loss alone and does
not guarantee good performance after merging.
Merge-aware training (MAT) aims to improve merged performance, but existing
methods do not fully account for common merging operations and add training cost.
We observe that, from an expert's perspective, common merging methods can be
described by three operations: Scale reweights its own update, Mask removes
selected coordinates, and Perturb adds updates from other experts.
Based on this view, we introduce \textsc{SMAT} (Simple MAT), which jointly
optimizes expert loss and expected loss at simulated merged parameters
generated by sampling scaling coefficients, masks, and additive noise.
We further introduce periodic scheduling, kernel fusion, and parameter storage
switching to make SMAT efficient, with one forward and one backward pass per step.
Across four language and vision-language backbones, SMAT improves the mean
score across five merging methods by 1.07--2.16 points over the strongest
baseline for each backbone, with less than 2\% training-time overhead over
standard fine-tuning.
\end{abstract}

\input{figures/overview}
\input{sections/01_introduction}
\input{sections/02_related_work}
\input{sections/03_preliminaries}
\input{sections/04_method}
\input{sections/05_experiments}
\FloatBarrier
\input{sections/06_conclusion}
\input{sections/07_statements}

\bibliography{references,future_work}
\bibliographystyle{iclr2027_conference}

\clearpage
\appendix
\input{sections/appendix/01_experimental_settings}
\input{sections/appendix/02_analysis_settings}
\input{sections/appendix/03_theory}
\input{sections/appendix/04_training_algorithm}
\input{sections/appendix/09_implementation_measurement}
\clearpage
\input{sections/appendix/10_future_work}
\clearpage

\end{document}

%% file: tables/results_style.tex
\makeatletter
\newcommand{\smatgroupinset}{%
  \begingroup
  \ifx\CT@row@color\relax
    \hskip1.8pt\relax
  \else
    \let\CT@color\color
    \CT@row@color
    \vrule width 1.8pt\relax
  \fi
  \endgroup
}
\newcommand{\smatsetrulecolor}{%
  \let\smat@savedrulecolor\CT@arc@
  \arrayrulecolor{black!60}%
}
\newcommand{\smatrestorerulecolor}{%
  \global\let\CT@arc@\smat@savedrulecolor
}
\makeatother
\newcolumntype{G}{!{\smatgroupinset{\color{black!60}\vrule width 0.4pt}\smatgroupinset}}
\newlength{\smatresultscorewidth}
\newcommand{\smatresultsetup}{%
  \small
  \setlength{\tabcolsep}{2.2pt}%
  \setlength{\extrarowheight}{1pt}%
  \setlength{\arrayrulewidth}{0.4pt}%
  \setlength{\heavyrulewidth}{0.8pt}%
  \setlength{\lightrulewidth}{0.4pt}%
  \setlength{\cmidrulewidth}{0.4pt}%
  \smatsetrulecolor
  \setlength{\aboverulesep}{0pt}%
  \setlength{\belowrulesep}{0pt}%
  \renewcommand{\arraystretch}{1.12}%
  \settowidth{\smatresultscorewidth}{00.00}%
}
\newcommand{\smatcostunit}[1]{{\fontsize{6}{7}\selectfont\color{black!70}#1}}
\newcommand{\smatmergeheading}[1]{%
  \makebox[\smatresultscorewidth][c]{#1}%
  \phantom{\kern0.2pt{\scriptsize$\uparrow$0.0}}%
}
\newcommand{\smatavgheading}{\smatmergeheading{\textbf{Avg.}}}

%% file: figures/overview.tex
\begin{figure}[H]
  \centering
  \includegraphics[width=\linewidth]{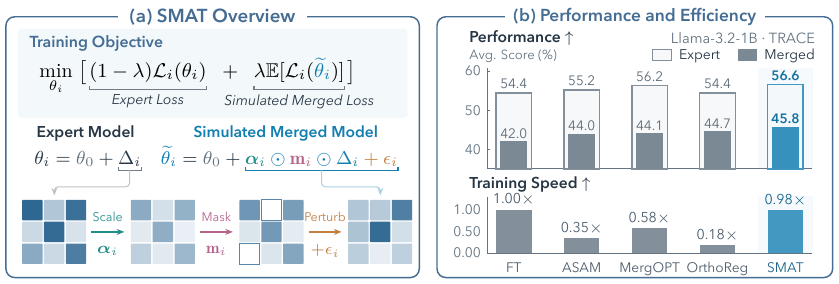}
  \caption{\textbf{Overview of SMAT. } (a) SMAT simulates merged parameter states during expert training through Scale, Mask, and Perturb. (b) This improves merged performance while retaining training speed close to standard fine-tuning.}
  \label{fig:overview}
\end{figure}

%% file: sections/01_introduction.tex
\section{Introduction}
\label{sec:introduction}

Model merging combines parameter updates from independently fine-tuned experts
that share a pretrained initialization, integrating their capabilities without
joint retraining~\citep{ilharco2023taskarithmetic}. Standard fine-tuning,
however, optimizes only each expert's task loss, which does not guarantee
good performance after merging. Overlapping updates and conflicting signs can
reduce task performance after merging~\citep{yadav2023ties}.

Merge-aware training (MAT) trains experts to perform well after merging.
SAFT~\citep{lee2025saft} trains for low loss near the expert's weights;
MergOPT~\citep{yang2026mergopt} adds noise to simulate merging;
and OrthoReg~\citep{liu2026orthoreg} encourages orthogonality within update matrices.
However, these methods do not fully consider common merging operations,
especially scaling and masking an expert's own update. They also increase
training cost through extra passes, perturbation processing, or matrix regularization.

We observe that Task Arithmetic~\citep{ilharco2023taskarithmetic},
TIES-Merging~\citep{yadav2023ties}, DARE~\citep{yu2024dare}, and
DELLA~\citep{deep2024della} can be described by three operations from an
expert's perspective:
\emph{Scale} reweights its own update, \emph{Mask} removes selected
coordinates, and \emph{Perturb} adds updates from other experts.
Let $\theta_0$ be the pretrained parameters, $\theta_i$ the parameters of
expert $i$, and $\Delta_i=\theta_i-\theta_0$ its task vector.
From expert $i$'s view, the merged parameters are
\begin{equation}
    \theta_{\mathrm{merge}}
    =\theta_0+\boldsymbol\alpha_i\odot\mathbf m_i\odot\Delta_i+\epsilon_i,
    \qquad
    \epsilon_i=\sum_{j\ne i}\boldsymbol\alpha_j\odot\mathbf m_j\odot\Delta_j.
    \label{eq:merged-state}
\end{equation}
Here $\boldsymbol\alpha_i$ gives the merging weights,
$\mathbf m_i$ is the mask that selects retained updates,
and $\epsilon_i$ is the combined update from other experts;
$\odot$ denotes coordinatewise multiplication.

Motivated by these three merging operations, we introduce \textsc{SMAT}
(Simple MAT), illustrated in Figure~\ref{fig:overview}. Its objective jointly
optimizes the expert loss and the expected loss at simulated merged parameters to
improve both expert and merged performance. Following Eq.~\eqref{eq:merged-state},
SMAT samples scaling coefficients and masks for its own update and
noise for other experts' updates.

We further introduce three improvements for efficient SMAT training.
\emph{Periodic scheduling} applies the simulated merged loss every few steps
and the expert loss otherwise, keeping one forward and one backward pass per step.
\emph{Kernel fusion} computes Scale, Mask, and Perturb together to
reduce repeated parameter reads and writes. \emph{Parameter storage switching}
keeps the original and simulated weights in separate buffers and switches
between them to avoid repeated weight copying.

Our experiments cover language and vision-language models ranging from
151M to 8B parameters, evaluated with five merging methods.
Across all four backbones, SMAT achieves better merged performance and
shorter training time than the compared MAT methods, with less than 2\%
overhead over standard fine-tuning. For Llama-3.2-1B-Instruct, SMAT improves merged
performance by 1.12 points over the strongest baseline, OrthoReg, while
using 82\% less training time.
Further analysis shows that SMAT broadens low-loss regions along merge-relevant directions, providing a loss-smoothing perspective on its improved robustness to merging.

We summarize our three contributions:\par\nopagebreak[4]

\noindent\ding{182}~We formulate merge-aware training through Scale, Mask, and
Perturb, simulating these common merging operations during expert training.

\smallskip
\noindent\ding{183}~We implement this objective with one forward and one
backward pass per step, using periodic scheduling and efficient parameter operations.

\smallskip
\noindent\ding{184}~We demonstrate improved merging across language and vision-language
models with less than 2\% training-time overhead, examine merging behavior,
and extend SMAT to Muon.

%% file: sections/02_related_work.tex
\section{Related Work}
\label{sec:related_work}

\paragraph{Model Merging.}
\label{sec:related-merging}
Stochastic Weight Averaging~\citep{izmailov2018swa},
Model Stock~\citep{jang2024modelstock}, and
Model Ratatouille~\citep{rame2023ratatouille} show that weight averaging can
improve generalization.
Model Soups~\citep{wortsman2022soups} averages model weights, while Fisher
Merging~\citep{matena2022fisher} gives more important parameters more weight.
Task Arithmetic~\citep{ilharco2023taskarithmetic} combines updates from a shared
initialization, and AdaMerging~\citep{yang2024adamerging} learns coefficients
for tasks or layers.
TIES-Merging~\citep{yadav2023ties}, DARE~\citep{yu2024dare},
DELLA~\citep{deep2024della}, and Model Breadcrumbs~\citep{davari2024breadcrumbs}
use magnitude, sign agreement, or random deletion to select updates.
Other methods learn shared masks or retain task-specific masks and
rescalers~\citep{tang2023concrete,wang2024tall,huang2024emr}.
RegMean~\citep{jin2023regmean} uses linear regression to preserve layer
outputs, while TSV-Merge~\citep{gargiulo2025tsv} uses subspaces defined by
the singular vectors of task updates.
These methods change task vectors during merging. SMAT simulates their
scaling, masking, and additive changes during expert training.

\paragraph{Merge-Aware Training.}
\label{sec:related-mat}
Merge-aware training (MAT) changes expert training to improve performance
after merging. Tangent-space Task Arithmetic~\citep{ortiz2023tangent} uses
linearized fine-tuning for this purpose.
SAFT~\citep{lee2025saft} uses sharpness-aware fine-tuning to
find locally flatter solutions, building on SAM~\citep{foret2021sam} and
ASAM~\citep{kwon2021asam}. MergOPT~\citep{yang2026mergopt} trains at
sampled merge offsets, and OrthoReg~\citep{liu2026orthoreg} encourages
orthogonality within update matrices.
However, these methods do not fully consider common merging operations,
particularly the scaling and masking of an expert's own task vector.
They also incur extra computation through additional gradient evaluations,
perturbation processing, or matrix regularization.
\textsc{SMAT} trains experts under these changes by simulating Scale, Mask, and Perturb. Periodic scheduling and fused
operations reduce overhead.

%% file: sections/03_preliminaries.tex
\section{Preliminaries}
\label{sec:preliminaries}
\subsection{Problem Setup}
\label{sec:problem-setup}

We consider $N$ tasks with a shared pretrained initialization
$\theta_0\in\mathbb{R}^d$, where $d$ is the number of trainable shared
parameters that we merge and perturb.
Expert $i$ trains on its data distribution $\mathcal{D}_i$, without access
to other experts' data or checkpoints.

Let $x$ and $y$ be an input and its target, $f_i(x;\theta)$ expert $i$'s
predictor with parameters $\theta$, and $\ell_i$ its per-example loss. Its
expected task loss is
\begin{equation}
    \mathcal{L}_i(\theta)
    = \mathbb{E}_{(x,y)\sim\mathcal{D}_i}
      \big[\ell_i(f_i(x;\theta),y)\big],
    \label{eq:task-loss}
\end{equation}
where $\mathbb{E}$ averages over the indicated distribution.
Standard fine-tuning minimizes $\mathcal{L}_i$ to obtain $\theta_i$ for each expert.
The shared parameters are then merged; any task-specific heads stay separate.

\subsection{Task Vectors and Model Merging}
\label{sec:task-vector-merging}

Expert $i$'s task vector is $\Delta_i=\theta_i-\theta_0$, where $\theta_i$
denotes its fine-tuned parameters.
Equation~\eqref{eq:merged-state} weights and masks task vectors before
adding them to the shared initialization. The weights
$\boldsymbol\alpha_j$ can vary by parameter, and both weights and masks can
depend on all experts. This form covers Model Soups~\citep{wortsman2022soups},
Fisher Merging~\citep{matena2022fisher}, and Task Arithmetic~\citep{ilharco2023taskarithmetic},
as well as masking in TIES~\citep{yadav2023ties}, DARE~\citep{yu2024dare},
and DELLA~\citep{deep2024della}.

We use $\gamma$ for a scalar merging coefficient and $\mathbf1$ for the
all-ones vector. Task Arithmetic sets $\boldsymbol\alpha_j=\gamma\mathbf1$
and $\mathbf m_j=\mathbf1$; weight averaging instead uses
$\boldsymbol\alpha_j=\mathbf1/N$.

Relative to expert $i$, merging changes the parameters by
\begin{equation}
    \theta_{\mathrm{merge}}-\theta_i
    = (\boldsymbol\alpha_i\odot\mathbf m_i-\mathbf1)\odot\Delta_i+\epsilon_i.
    \label{eq:expert-displacement}
\end{equation}
Scale and Mask change the expert's own update, while Perturb represents the
other experts' contribution $\epsilon_i$. SMAT simulates these three changes
during independent training.

\subsection{The MAT Objective}
\label{sec:mat-objective}

Training should preserve each expert's performance while improving the merged
model. We express this goal with the MAT objective
\begin{equation}
    \mathcal{J}_i^{\mathrm{MAT}}
    = (1-\lambda)\mathcal{L}_i(\theta_i)
       +\lambda\mathcal{L}_i(\theta_{\mathrm{merge}}),
       \qquad 0\leq\lambda\leq1.
    \label{eq:ideal-mat}
\end{equation}
Here $\mathcal{J}_i^{\mathrm{MAT}}$ is expert $i$'s training objective, and
$\lambda$ weights the merged loss. The endpoints $\lambda=0$ and $\lambda=1$
train only for expert performance and merged performance, respectively.

%% file: sections/04_method.tex
\section{Method}
\label{sec:method}

\input{figures/operators}

\subsection{Simulating Merged Parameter States}
\label{sec:merge-simulation}

Following Eq.~\eqref{eq:merged-state}, SMAT simulates merged parameters
during expert training as
\begin{equation}
    \widetilde\theta_i
    =\theta_0+\boldsymbol\alpha_i\odot\mathbf m_i\odot\Delta_i+\epsilon_i.
    \label{eq:smat-state}
\end{equation}
We use $\boldsymbol\alpha_i=\alpha_i\mathbf1$, with one scalar coefficient
shared by all coordinates. Here $\mathbf m_i$ is a sampled mask, and
$\epsilon_i$ is sampled noise representing the combined updates from other
experts. We sample $\alpha_i$, $\mathbf m_i$, and $\epsilon_i$ independently.
Figure~\ref{fig:operators} illustrates these operations.

\paragraph{Scale.}
Merging may assign different weights to an expert's task vector, as in
WiSE-FT~\citep{wortsman2022wiseft} and Task Arithmetic~\citep{ilharco2023taskarithmetic}.
Scale samples a shared coefficient $\alpha_i\sim\mathcal{U}[\alpha_{\min},1]$,
where $\mathcal{U}$ is the uniform distribution and
$0\leq\alpha_{\min}\leq1$ is the smallest allowed coefficient.

\paragraph{Mask.}
Mask prepares the expert for coordinate removal using the random dropping
and rescaling of DARE~\citep{yu2024dare}.
This is related to stochastic masking in Dropout~\citep{srivastava2014dropout}
and DropConnect~\citep{wan2013dropconnect}, and to mixing weights with their
pretrained values in Mixout~\citep{lee2020mixout}.
Let $\mathcal S$ be the set of coordinates to mask: by default, linear
weights in Transformer attention and MLP blocks. Let $p$ be their drop
probability. For coordinate $k$, set $p_k=p$ inside $\mathcal S$ and $p_k=0$
outside it, and draw
\begin{equation}
    m_{i,k}=\begin{cases}
        0, & \text{with probability }p_k,\\
        \dfrac{1}{1-p_k}, & \text{with probability }1-p_k,
    \end{cases}
    \qquad 0\leq p<1.
    \label{eq:smat-mask}
\end{equation}

Here $m_{i,k}$ is the $k$-th mask entry and $\mathbf p$ collects the drop
probabilities. Masks are independent across coordinates; biases, normalization
parameters, and embeddings are excluded. Since $\mathbb{E}[m_{i,k}]=1$,
rescaling preserves the expected scaled update. Mask acts only on the task
vector, leaving pretrained weights and additive noise unchanged.

\noindent
\begin{minipage}[t]{.53\linewidth}
\vspace{0pt}
\paragraph{Perturb.}
Perturb simulates other experts' additive updates $\epsilon_i$ in
Eq.~\eqref{eq:merged-state}. Their checkpoints and task vectors are
unavailable during independent training, so we approximate the unknown
update with random parameter noise.

\smallskip
With Perturb alone on Llama-3.2-1B-Instruct with AdamW, we compare uniform, Gaussian, and
Laplace noise at matched root mean square (RMS)
(Figure~\ref{fig:noise-distributions}). Their trends are broadly similar,
and uniform gives the highest merged performance when RMS $=2 \times 10^{-3}$.
Laplace, motivated by task-vector statistics in
MergOPT~\citep{yang2026mergopt}, has no consistent advantage.
This suggests that exposure to additive parameter variation may matter
more than precise distribution matching in this setting.
\end{minipage}\hfill
\begin{minipage}[t]{.45\linewidth}
\vspace{0pt}
\input{figures/noise_distributions}
\end{minipage}
\par

This observation motivates a robustness view through loss
smoothing~\citep{wen2018smoothout}. With Scale and Mask fixed, the expected
perturbed loss averages task loss over nearby parameters, encouraging
tolerance to additive merge offsets. Appendix~\ref{app:perturb-smoothing}
expresses this as convolution; under its smoothness assumptions, the
leading small-noise correction depends on local curvature. Perturb may
therefore improve merge robustness without matching the full distribution
of other experts' updates. We independently draw
$\epsilon_{i,k}\sim\mathcal U[-\sqrt3\sigma,\sqrt3\sigma]$
at each trainable coordinate, with fixed RMS $\sigma$, zero mean, and
variance $\sigma^2$.

\subsection{A Practical Training Objective}
\label{sec:practical-objective}

We approximate the merged loss in Eq.~\eqref{eq:ideal-mat} by an expectation
over simulated parameters:
\begin{equation}
    \mathcal{J}_i^{\mathrm{SMAT}}
    = (1-\lambda)\mathcal{L}_i(\theta_i)
       +\lambda\mathbb{E}_{\alpha_i,\mathbf{m}_i,\epsilon_i}
          \big[\mathcal{L}_i(\widetilde{\theta}_i)\big].
    \label{eq:smat-objective}
\end{equation}
Here $\mathcal J_i^{\mathrm{SMAT}}$ is the training objective; $\lambda$
weights the simulated loss, and the expectation averages the three random draws.
At each simulated-loss update, one sampled state provides a stochastic estimate of this expectation, so training does not enumerate merging configurations or expert combinations.

How closely this objective represents actual merging depends on the sampled states.
Appendix~\ref{app:perturb-smoothing} relates the loss difference to differences
in the state means and second moments.

\paragraph{Gradient Analysis.}
Let $B$ be a sampled minibatch and $\mathcal L_{i,B}$ its average task loss.
With the three draws held fixed, the chain rule gives the gradient ($\nabla$)
with respect to the expert parameters $\theta_i$:
\begin{equation}
    \nabla_{\theta_i}\mathcal{L}_{i,B}
      (\widetilde{\theta}_i)
    = \alpha_i\mathbf m_i\odot
      \nabla_{\widetilde{\theta}_i}
        \mathcal{L}_{i,B}(\widetilde{\theta}_i).
    \label{eq:smat-gradient}
\end{equation}
The two factors have a simple meaning: $\alpha_i\mathbf m_i$ is expert
$i$'s participation coefficient at each coordinate, and the second factor is the simulated
merged model's gradient on task $i$. SMAT simulates merged weights with
Scale, Mask, and Perturb, then uses the participation coefficient to weight
the gradient passed back to the expert. If $\alpha_i=0$ or $m_{i,k}=0$,
the expert's $k$-th task-vector coordinate does not contribute to the simulated
merged model and receives zero gradient from the simulated loss.

\input{figures/efficiency}
\subsection{Scheduling and Efficient Implementation}
\label{sec:efficient-training}

Directly optimizing Eq.~\eqref{eq:smat-objective} evaluates both losses at
every step. These mixed-loss updates can instead be approximated by separate
updates on each loss, with step counts proportional to their weights.
Let $\mathcal L_A$ denote the expert loss and $\mathcal L_B$ the expected
simulated loss.

\begin{proposition}[Replacing mixed-loss updates]
\label{prop:periodic-approximation}
Let $J=(1-k/n)\mathcal L_A+(k/n)\mathcal L_B$, with integers $n\geq1$
and $0\leq k\leq n$. Starting from the same parameters and using a constant
step size $\eta>0$, replace $n$ full-gradient descent steps on $J$ with
$n-k$ steps on $\mathcal L_A$ and $k$ steps on $\mathcal L_B$ in any order.
If both loss gradients are $L_g$-Lipschitz and bounded in norm by $G$
in a neighborhood containing both trajectories, the resulting parameters satisfy
\begin{equation}
 \|\theta_n^{\mathrm{mixed}}-\theta_n^{\mathrm{separate}}\|_2
 \leq \eta^2 L_g G\,n(n-1).
 \label{eq:periodic-approximation}
\end{equation}
\end{proposition}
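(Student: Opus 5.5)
Our plan is to compare both trajectories with the frozen-gradient path that uses gradients at the common starting point $\theta_0$ (here the starting parameters of the $n$ steps, not the pretrained weights). Write $g_A=\nabla\mathcal L_A$ and $g_B=\nabla\mathcal L_B$. Then $\theta_n^{\mathrm{mixed}}=\theta_0-\eta\sum_{t=0}^{n-1}\nabla J(\theta_t^{\mathrm{mixed}})$. Likewise $\theta_n^{\mathrm{separate}}=\theta_0-\eta\sum_{t=0}^{n-1} g_{c_t}(\theta_t^{\mathrm{sep}})$, where $c_t\in\{A,B\}$ records which loss is used at step $t$. The count of $B$'s is exactly $k$, and the ordering is arbitrary.

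The key observation is that the frozen sums agree:
\begin{equation*}
\sum_{t=0}^{n-1}\nabla J(\theta_0)=(n-k)\,g_A(\theta_0)+k\,g_B(\theta_0)=\sum_{t=0}^{n-1}g_{c_t}(\theta_0).
\end{equation*}
The order of the separate steps is therefore irrelevant to first order, and the difference of the endpoints is
\begin{equation*}
\theta_n^{\mathrm{sep}}-\theta_n^{\mathrm{mixed}}=\eta\sum_{t=0}^{n-1}\Big[\big(\nabla J(\theta_t^{\mathrm{mixed}})-\nabla J(\theta_0)\big)-\big(g_{c_t}(\theta_t^{\mathrm{sep}})-g_{c_t}(\theta_0)\big)\Big].
\end{equation*}

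Next we bound each drift term. Every step of either trajectory moves the parameters by at most $\eta G$: this uses $\|g_A\|,\|g_B\|\le G$, and the convex combination $\nabla J$ is also bounded by $G$. Hence $\|\theta_t-\theta_0\|\le \eta G t$ on both paths, provided both stay in the neighborhood where the hypotheses hold. Since $\nabla J$ is also $L_g$-Lipschitz, each bracketed difference has norm at most $L_g\eta G t+L_g\eta G t=2\eta L_g G t$. Summing gives
\begin{equation*}
\|\theta_n^{\mathrm{mixed}}-\theta_n^{\mathrm{sep}}\|\le \eta\sum_{t=0}^{n-1}2\eta L_gGt=\eta^2L_gG\,n(n-1),
\end{equation*}
which is exactly the claimed bound \eqref{eq:periodic-approximation}.

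There is no real obstacle here. The only point needing care is the bookkeeping: anchoring both trajectories at $\theta_0$, rather than comparing them step by step, is what avoids a Grönwall-type factor $(1+\eta L_g)^n$. The step count in the separate schedule must match the mixture weights exactly so that the frozen sums cancel. We also need to state explicitly that the neighborhood assumption covers every iterate of both trajectories, so the Lipschitz and boundedness bounds apply along both paths.
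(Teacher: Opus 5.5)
Your proof is correct and matches the paper's argument: it anchors both trajectories at the common start $w$, uses the matched step counts so the frozen-gradient sums coincide, and bounds the drift using $\|\theta_t-w\|_2\leq t\eta G$. The only difference is that the paper compares each trajectory separately with $w-n\eta\nabla J(w)$, gets two bounds of $\eta^2L_gG\,n(n-1)/2$, and adds them by the triangle inequality, which is equivalent to your single combined bracket bound.
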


Thus, mixed-loss training can be replaced to first order by a short sequence
of single-loss steps. Appendix~\ref{app:training-algorithm} gives the proof
and SMAT's cycle-averaged gradient bound. This motivates periodic scheduling;
kernel fusion and parameter storage switching further reduce its overhead
(Figure~\ref{fig:efficient-training}).

\paragraph{Periodic scheduling.}
Each cycle has $t-1$ expert-loss updates and one simulated-loss update,
matching $\lambda=1/t$ in Eq.~\eqref{eq:smat-objective}. We use $t=4$,
corresponding to three expert-loss steps and one simulated-loss step.
Each step computes only one loss, using one forward and one backward pass.

\paragraph{Kernel fusion.}
Separate Scale, Mask, and Perturb operations repeatedly access
parameter-sized tensors. Two Triton kernels fuse weight simulation and
gradient rescaling (Eq.~\eqref{eq:smat-gradient}). They sample noise and masks
and group compatible tensors, reducing memory traffic and kernel launches.

\paragraph{Parameter storage switching.}
SMAT uses simulated weights in a reusable buffer for the forward and backward
pass, then restores the original storage for the optimizer update.
Frozen base weights are prefetched from pinned CPU memory into this buffer
during ordinary steps. This needs one additional GPU buffer the size of the
trainable weights.

%% file: figures/operators.tex
\begin{figure}[!t]
  \centering
  \includegraphics[width=\linewidth]{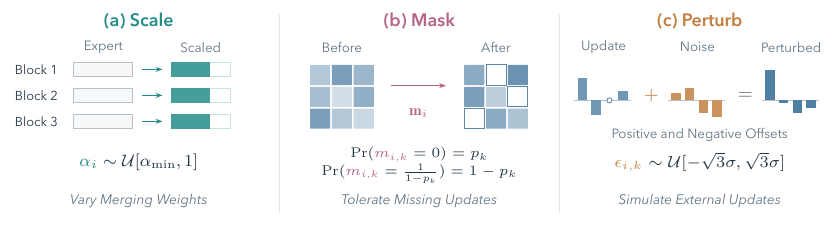}
  \caption{\textbf{Simulating merged updates.} Scale reweights the task
  vector, Mask drops and rescales its coordinates, and Perturb adds uniform noise to
  simulate other experts' updates.}
  \label{fig:operators}
\end{figure}

%% file: figures/noise_distributions.tex
  \centering
  \includegraphics[width=\linewidth]{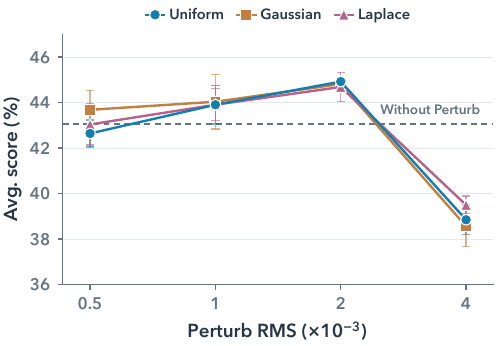}
  {\small\captionof{figure}{Perturb alone on Llama-1B (AdamW, $t=4$).
  Five-merger means $\pm$ SD.}
  \label{fig:noise-distributions}}

%% file: figures/efficiency.tex
\begin{figure}[!t]
  \centering
  \includegraphics[width=\linewidth]{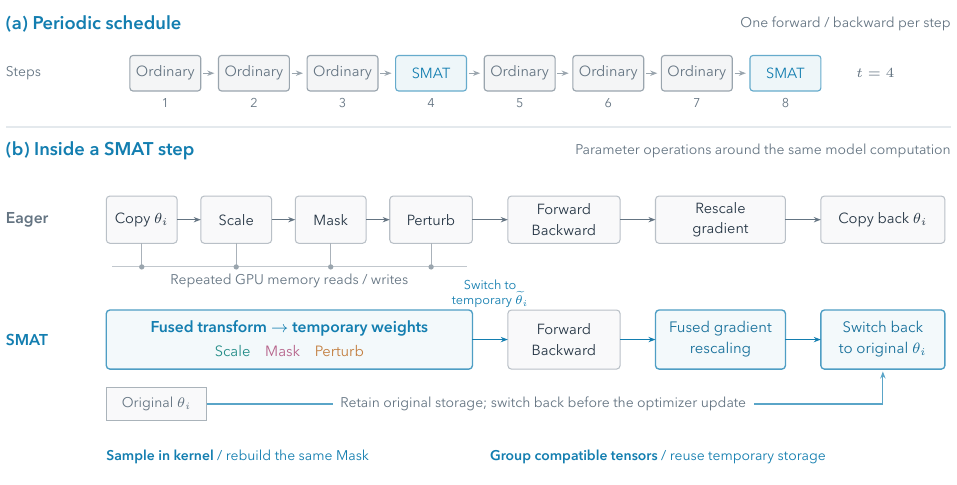}
  \caption{\textbf{Efficient SMAT training.} Each period of $t=4$ steps includes one
  SMAT update. Fused kernels and storage switching reduce parameter overhead.}
  \label{fig:efficient-training}
\end{figure}

%% file: sections/05_experiments.tex
\section{Experiments}
\label{sec:experiments}

\subsection{Experimental Setup}
\label{sec:experimental-setup}
\paragraph{Models and Tasks.}
We evaluate Llama-3.2-1B-Instruct and Llama-3.1-8B-Instruct from the
Llama 3 family~\citep{grattafiori2024llama3} on seven
TRACE~\citep{wang2023trace} tasks: C-STANCE, FOMC, MeetingBank, ScienceQA,
NumGLUE-cm, NumGLUE-ds, and 20Minuten. For vision, we evaluate
CLIP~\citep{radford2021clip} with ViT~\citep{dosovitskiy2021vit}
encoders (ViT-B/32 and ViT-L/14) on Cars, DTD, EuroSAT,
GTSRB, MNIST, RESISC45, SUN397, and SVHN, following
Task Arithmetic~\citep{ilharco2023taskarithmetic}.

\paragraph{Compared Methods.}
We compare SMAT with standard fine-tuning (FT),
ASAM~\citep{kwon2021asam} implemented as in SAFT~\citep{lee2025saft},
MergOPT~\citep{yang2026mergopt}, and OrthoReg~\citep{liu2026orthoreg}.
Each expert set is merged using weight averaging (WA),
Task Arithmetic~\citep{ilharco2023taskarithmetic} (TA),
TIES~\citep{yadav2023ties}, DARE~\citep{yu2024dare} followed by TA,
and DELLA~\citep{deep2024della}.

\paragraph{Training.}
Within each backbone, all methods share data, initialization, learning rate,
batch size, and training budget.
For language models, we follow the base training settings of
MergOPT~\citep{yang2026mergopt}: AdamW~\citep{loshchilov2019adamw} with
learning rate $2\times10^{-5}$, batch size 8, and task-specific epoch counts.
For vision models, we follow the public CLIP fine-tuning configuration of
FusionBench~\citep{tang2025fusionbench}: Adam~\citep{kingma2015adam} with
learning rate $10^{-5}$, batch size 128, and 4,000 updates per task.
We update all language-model parameters and only the CLIP vision encoder.
SMAT uses fused operations and simulates every $t=4$ steps, with Scale sampled from
$\mathcal{U}[\alpha_{\min},1]$, and Mask probability $p=0.5$ on
attention and MLP linear weights. Both language backbones use
$(\alpha_{\min},\sigma)=(0.2,2\times10^{-3})$; both vision backbones use
$(0.1,10^{-3})$, where $\alpha_{\min}$ is the Scale lower bound and $\sigma$
is the Perturb RMS. Each model with a MAT method is trained and evaluated three times using different random seeds, and the average performance is reported as the experimental result.

\paragraph{Dev Search.}
All methods share a development (dev) split held out from training data.
The dev and final evaluation splits have no overlapping examples.
We use dev data to select merging configurations.
For each backbone and training method, TA, TIES, DARE, and DELLA select
$\gamma\in\{0.1,0.2,\ldots,1.0\}$ by mean dev score, breaking ties in
favor of smaller values. WA requires no search. On TRACE, the top two
candidates are ranked by their mean over three decoding repeats.
Sparsity is calibrated on FT dev results and shared within each model family
(Appendix~\ref{app:current-main-settings}).

\paragraph{Evaluation.}
Scores are task means on a 0--100 scale: ROUGE-L~\citep{lin2004rouge} for
MeetingBank, SARI~\citep{xu2016sari} for 20Minuten, and accuracy otherwise.
We report final performance on the released eval split for language, with
three decoding repeats per checkpoint, and on the test split for vision.
Expert scores evaluate each expert
on its own task; Avg. is the mean merged score over five mergers.
Appendix~\ref{app:current-main-settings} gives the data splits and
configuration selection rules.

\input{tables/llm_results}
\input{tables/vit_results}
\subsection{Main Results}
\label{sec:main-results}

\paragraph{Language Models.}
\label{sec:language-results}
Table~\ref{tab:llm-results} shows that SMAT achieves Avg. scores across five mergers
of 45.77 on Llama-1B and 58.49 on Llama-8B, exceeding the strongest baseline,
OrthoReg, by 1.12 and 1.07 points, respectively.
It leads three of five merging columns on 1B and all five on 8B.
Expert performance improves over FT on 1B but decreases on 8B, showing that
better merged performance need not coincide with a better independent expert.

\paragraph{Vision-Language Models.}
\label{sec:vision-results}
In Table~\ref{tab:vit-results}, SMAT leads all five merging columns on both
CLIP backbones. Its Avg. across five mergers is 74.46 on ViT-B/32,
1.89 points above MergOPT (72.57), and 87.78 on ViT-L/14,
2.16 points above OrthoReg (85.62), the strongest baseline in each setting.
We use the same SMAT hyperparameters for both model sizes.

\paragraph{Training Efficiency.}
With frozen-base offload, SMAT adds about 0.2--1.9\% training-time overhead
across the four backbones, while increasing peak GPU memory by 1.7--24.3\%,
as shown in Tables~\ref{tab:llm-results} and~\ref{tab:vit-results}.
We measure expert training loops; the speed in Figure~\ref{fig:overview}
is FT time divided by each method's time. Appendix~\ref{app:training-time}
gives absolute costs and the software difference affecting ASAM timing.
\subsection{Analysis}
\label{sec:experimental-analysis}
We study SMAT's generalization and underlying mechanisms on Llama-1B using TRACE.

\noindent
\begin{minipage}[t]{.48\linewidth}
\vspace{0pt}
\paragraph{Generalization to Muon.}
\label{sec:muon-results}
To examine whether SMAT's gains extend beyond AdamW, we evaluate it with Muon on Llama-1B.
SMAT achieves an average score of 44.15 across five merging methods, outperforming OrthoReg by 0.74 points and standard fine-tuning by 5.90 points, while leading on four of the five mergers.
Its independent-expert performance remains close to FT (55.07 vs.\ 55.31), and its measured training time is $1.00\times$ that of FT.
These results show that SMAT's merging gains and low training overhead extend to Muon in this setting.
Full settings and results are provided in ~\ref{app:muon-refinement}.
\end{minipage}\hfill
\begin{minipage}[t]{.50\linewidth}
\vspace{0pt}
\input{tables/muon_results}
\end{minipage}
\par\medskip

\input{figures/expert_analyses}


\paragraph{Effect of the Number of Experts.}
\label{sec:expert-count}
We merge $K=2,\ldots,7$ experts with TA for FT, MergOPT, OrthoReg,
and SMAT, keeping each method's Table~\ref{tab:llm-results} TA coefficient.
Seven cyclic subsets balance task inclusion for $K<7$; $K=7$ uses all tasks.
We normalize scores by the corresponding FT expert scores, then average over
tasks and subsets (Appendix~\ref{app:expert-count-details}).
SMAT performs similarly to MergOPT with fewer experts and leads all three
baselines at $K=5,6,7$ in Figure~\ref{fig:expert-analyses}(a).
Its normalized score decreases from 93.20\% with two experts to 86.08\%
with all seven.
SMAT's advantage becomes clearer with more experts, 
suggesting greater robustness to the combined updates in larger expert sets.

\paragraph{Compatibility with Standard Experts.}
\label{sec:standard-expert-compatibility}
Figure~\ref{fig:expert-analyses}(b) replaces $q$ of seven FT experts with
MergOPT, OrthoReg, or SMAT experts for the same tasks. We use identical balanced
replacement sets and Table~\ref{tab:llm-results} training configurations.
All curves start from all-FT and fix the TA coefficient at $\gamma=0.3$.
The normalized score rises from 77.55\% for all-FT to 80.97\%, 83.51\%,
and 86.08\% for all-MergOPT, all-OrthoReg, and all-SMAT, respectively.
SMAT's curve rises with each replacement count and remains above the other
curves for $q>0$.
Thus, SMAT can improve an existing FT expert set through partial replacement,
with larger gains than the other methods under this common TA coefficient.

\paragraph{Sources of Merging Gains.}
\label{sec:merge-gain-analysis}
Figure~\ref{fig:expert-analyses}(c) compares expert scores with five-merger
averages from Table~\ref{tab:llm-results}; their difference is the merging drop.
Relative to FT and MergOPT, SMAT has both a higher expert score and a
smaller drop. Relative to OrthoReg, its higher expert score offsets a larger
drop. A smaller merging drop alone therefore does not determine the value
of a training method: SMAT achieves the best mean merged performance by
balancing expert quality and retention after merging.

\input{figures/loss_slices}
\paragraph{Loss Along Merge Directions.}
\label{sec:optimization-landscape}
Figure~\ref{fig:optimization-landscape} probes answer-token negative log-likelihood
(NLL) along two merge directions: rescaling an expert's own update and
adding updates from other tasks.
For target task $i$ and training method $r$, let $\Delta_i^{(r)}$ be its
update and $u_i^{\mathrm{FT}}$ the shared FT partner update. We scale them
by $a$ and $b$, respectively, to define
\begin{equation}
\theta_i^{(r)}(a,b)=\theta_0+a\Delta_i^{(r)}+b u_i^{\mathrm{FT}},\qquad
u_i^{\mathrm{FT}}=\gamma\sum_{j\ne i}\Delta_j^{\mathrm{FT}},\quad\gamma=0.3.
\label{eq:shared-interference-landscape}
\end{equation}
We use the Table~\ref{tab:llm-results} FT, MergOPT, and SMAT
expert sets, with identical FT partner updates and evaluation samples.
The point $(1,0)$ recovers the independent expert.
SMAT shows broader low-loss regions than FT and MergOPT across the six tasks.
These regions are
consistent with Perturb's smoothing of the conditional training loss
(Appendix~\ref{app:perturb-smoothing}), supporting loss smoothing as a mechanism
for improved merging along the tested directions.

\noindent
\begin{minipage}[t]{.49\linewidth}
\vspace{0pt}
\paragraph{Ablation Studies.}
\label{sec:ablation-studies}
We ablate Scale, Mask, and Perturb individually with AdamW in
Table~\ref{tab:smp-ablation}, keeping the training budget and merger settings fixed.
Removing Scale, Mask, or Perturb lowers the Avg. across five mergers by 1.25, 1.07,
and 2.10 points, respectively.
All three operations contribute, with Perturb having the largest effect.
Its importance is consistent with the loss-smoothing interpretation above,
while Scale and Mask provide further gains.
\end{minipage}\hfill
\begin{minipage}[t]{.48\linewidth}
\vspace{0pt}
\input{tables/ablation_summary}

\end{minipage}
\par\medskip

%% file: tables/llm_results.tex
\begin{table}[t]
\centering
\caption{TRACE results with AdamW. Bold marks the best score for each model.}
\label{tab:llm-results}
\vspace{2pt}
\smatresultsetup
\setlength{\extrarowheight}{0.5pt}
\renewcommand{\arraystretch}{1.08}
\setlength{\tabcolsep}{2.0pt}
\begin{tabularx}{\linewidth}{lGcG*{6}{>{\centering\arraybackslash}X}Gcc}
\arrayrulecolor{black}
\toprule
\arrayrulecolor{black!60}
\multirow{2}{*}{Method} & \multirow{2}{*}{Expert $\uparrow$} & \multicolumn{6}{cG}{Merged Model Performance $\uparrow$} & \multicolumn{2}{c}{Training cost $\downarrow$} \\
 & & \smatmergeheading{WA} & \smatmergeheading{TA} & \smatmergeheading{TIES} & \smatmergeheading{DARE} & \smatmergeheading{DELLA} & \smatavgheading & Time\,\smatcostunit{($\times$FT)} & Mem.\,\smatcostunit{(GiB)} \\
\midrule
\multicolumn{10}{@{}c@{}}{\rule[-0.6ex]{0pt}{3.1ex}\textit{Llama-3.2-1B-Instruct}} \\
\midrule
FT & 54.43 & 39.88\kern0.2pt{\scriptsize\color{black!30}$\uparrow$0.0} & 42.64\kern0.2pt{\scriptsize\color{black!30}$\uparrow$0.0} & 42.93\kern0.2pt{\scriptsize\color{black!30}$\uparrow$0.0} & 43.50\kern0.2pt{\scriptsize\color{black!30}$\uparrow$0.0} & 41.17\kern0.2pt{\scriptsize\color{black!30}$\uparrow$0.0} & 42.02\kern0.2pt{\scriptsize\color{black!30}$\uparrow$0.0} & 1.000 & 20.0\kern0.2pt{\scriptsize\color{black!30}$\uparrow$0.00\%} \\
ASAM & 55.20 & 38.05\kern0.2pt{\scriptsize\color{black!60}$\downarrow$1.8} & \textbf{46.29}\kern0.2pt{\scriptsize\color{black!60}$\uparrow$3.6} & 45.48\kern0.2pt{\scriptsize\color{black!60}$\uparrow$2.5} & 45.71\kern0.2pt{\scriptsize\color{black!60}$\uparrow$2.2} & 44.56\kern0.2pt{\scriptsize\color{black!60}$\uparrow$3.4} & 44.02\kern0.2pt{\scriptsize\color{black!60}$\uparrow$2.0} & 2.869 & 22.3\kern0.2pt{\scriptsize\color{black!60}$\uparrow$11.5\%} \\
MergOPT & 56.23 & 39.67\kern0.2pt{\scriptsize\color{black!60}$\downarrow$0.2} & 45.62\kern0.2pt{\scriptsize\color{black!60}$\uparrow$3.0} & 45.12\kern0.2pt{\scriptsize\color{black!60}$\uparrow$2.2} & 45.59\kern0.2pt{\scriptsize\color{black!60}$\uparrow$2.1} & 44.54\kern0.2pt{\scriptsize\color{black!60}$\uparrow$3.4} & 44.11\kern0.2pt{\scriptsize\color{black!60}$\uparrow$2.1} & 1.718 & 22.3\kern0.2pt{\scriptsize\color{black!60}$\uparrow$11.5\%} \\
OrthoReg & 54.44 & \textbf{43.85}\kern0.2pt{\scriptsize\color{black!60}$\uparrow$4.0} & 45.77\kern0.2pt{\scriptsize\color{black!60}$\uparrow$3.1} & 45.93\kern0.2pt{\scriptsize\color{black!60}$\uparrow$3.0} & 45.16\kern0.2pt{\scriptsize\color{black!60}$\uparrow$1.7} & 42.57\kern0.2pt{\scriptsize\color{black!60}$\uparrow$1.4} & 44.66\kern0.2pt{\scriptsize\color{black!60}$\uparrow$2.6} & 5.693 & 28.2\kern0.2pt{\scriptsize\color{black!60}$\uparrow$41.1\%} \\
\rowcolor{citationblue!7}
\textbf{SMAT} & \textbf{56.61} & 42.59\kern0.2pt{\scriptsize\color{black!60}$\uparrow$2.7} & 46.11\kern0.2pt{\scriptsize\color{black!60}$\uparrow$3.5} & \textbf{47.17}\kern0.2pt{\scriptsize\color{black!60}$\uparrow$4.2} & \textbf{46.90}\kern0.2pt{\scriptsize\color{black!60}$\uparrow$3.4} & \textbf{46.09}\kern0.2pt{\scriptsize\color{black!60}$\uparrow$4.9} & \textbf{45.77}\kern0.2pt{\scriptsize\color{black!60}$\uparrow$3.7} & 1.019 & 22.3\kern0.2pt{\scriptsize\color{black!60}$\uparrow$11.5\%} \\
\midrule
\multicolumn{10}{@{}c@{}}{\rule[-0.6ex]{0pt}{3.1ex}\textit{Llama-3.1-8B-Instruct}} \\
\midrule
FT & \textbf{62.88} & 55.43\kern0.2pt{\scriptsize\color{black!30}$\uparrow$0.0} & 56.03\kern0.2pt{\scriptsize\color{black!30}$\uparrow$0.0} & 55.94\kern0.2pt{\scriptsize\color{black!30}$\uparrow$0.0} & 56.46\kern0.2pt{\scriptsize\color{black!30}$\uparrow$0.0} & 56.01\kern0.2pt{\scriptsize\color{black!30}$\uparrow$0.0} & 55.97\kern0.2pt{\scriptsize\color{black!30}$\uparrow$0.0} & 1.000 & 30.7\kern0.2pt{\scriptsize\color{black!30}$\uparrow$0.00\%} \\
ASAM & 62.05 & 57.09\kern0.2pt{\scriptsize\color{black!60}$\uparrow$1.7} & 56.88\kern0.2pt{\scriptsize\color{black!60}$\uparrow$0.9} & 57.61\kern0.2pt{\scriptsize\color{black!60}$\uparrow$1.7} & 57.01\kern0.2pt{\scriptsize\color{black!60}$\uparrow$0.6} & 56.60\kern0.2pt{\scriptsize\color{black!60}$\uparrow$0.6} & 57.04\kern0.2pt{\scriptsize\color{black!60}$\uparrow$1.1} & 2.676 & 43.3\kern0.2pt{\scriptsize\color{black!60}$\uparrow$40.9\%} \\
MergOPT & 61.24 & 56.23\kern0.2pt{\scriptsize\color{black!60}$\uparrow$0.8} & 57.46\kern0.2pt{\scriptsize\color{black!60}$\uparrow$1.4} & 57.97\kern0.2pt{\scriptsize\color{black!60}$\uparrow$2.0} & 57.53\kern0.2pt{\scriptsize\color{black!60}$\uparrow$1.1} & 56.63\kern0.2pt{\scriptsize\color{black!60}$\uparrow$0.6} & 57.16\kern0.2pt{\scriptsize\color{black!60}$\uparrow$1.2} & 1.543 & 38.2\kern0.2pt{\scriptsize\color{black!60}$\uparrow$24.3\%} \\
OrthoReg & 58.72 & 56.75\kern0.2pt{\scriptsize\color{black!60}$\uparrow$1.3} & 58.19\kern0.2pt{\scriptsize\color{black!60}$\uparrow$2.2} & 56.91\kern0.2pt{\scriptsize\color{black!60}$\uparrow$1.0} & 57.90\kern0.2pt{\scriptsize\color{black!60}$\uparrow$1.4} & 57.33\kern0.2pt{\scriptsize\color{black!60}$\uparrow$1.3} & 57.42\kern0.2pt{\scriptsize\color{black!60}$\uparrow$1.4} & 11.58 & 61.5\kern0.2pt{\scriptsize\color{black!60}$\uparrow$100.\%} \\
\rowcolor{citationblue!7}
\textbf{SMAT} & 61.23 & \textbf{57.99}\kern0.2pt{\scriptsize\color{black!60}$\uparrow$2.6} & \textbf{58.72}\kern0.2pt{\scriptsize\color{black!60}$\uparrow$2.7} & \textbf{58.44}\kern0.2pt{\scriptsize\color{black!60}$\uparrow$2.5} & \textbf{58.54}\kern0.2pt{\scriptsize\color{black!60}$\uparrow$2.1} & \textbf{58.75}\kern0.2pt{\scriptsize\color{black!60}$\uparrow$2.7} & \textbf{58.49}\kern0.2pt{\scriptsize\color{black!60}$\uparrow$2.5} & 1.014 & 38.2\kern0.2pt{\scriptsize\color{black!60}$\uparrow$24.3\%} \\
\arrayrulecolor{black}
\bottomrule
\end{tabularx}
\smatrestorerulecolor
\par\vspace{4pt}
\begin{minipage}{\linewidth}
\footnotesize
Expert: mean score before merging. Avg.: mean over five mergers.
Mem.: peak GPU memory. Arrows: change from FT in score points or memory \%.
\end{minipage}
\end{table}

%% file: tables/vit_results.tex
\begin{table}[t]
\centering
\caption{CLIP results with Adam. Bold marks the best score for each model.}
\label{tab:vit-results}
\vspace{2pt}
\smatresultsetup
\setlength{\extrarowheight}{0.5pt}
\renewcommand{\arraystretch}{1.08}
\setlength{\tabcolsep}{2.0pt}
\begin{tabularx}{\linewidth}{lGcG*{6}{>{\centering\arraybackslash}X}Gcc}
\arrayrulecolor{black}
\toprule
\arrayrulecolor{black!60}
\multirow{2}{*}{Method} & \multirow{2}{*}{Expert $\uparrow$} & \multicolumn{6}{cG}{Merged Model Performance $\uparrow$} & \multicolumn{2}{c}{Training cost $\downarrow$} \\
 & & \smatmergeheading{WA} & \smatmergeheading{TA} & \smatmergeheading{TIES} & \smatmergeheading{DARE} & \smatmergeheading{DELLA} & \smatavgheading & Time\,\smatcostunit{($\times$FT)} & Mem.\,\smatcostunit{(GiB)} \\
\midrule
\multicolumn{10}{@{}c@{}}{\rule[-0.6ex]{0pt}{3.1ex}\textit{CLIP ViT-B/32}} \\
\midrule
FT & 89.26 & 67.22\kern0.2pt{\scriptsize\color{black!30}$\uparrow$0.0} & 71.13\kern0.2pt{\scriptsize\color{black!30}$\uparrow$0.0} & 73.05\kern0.2pt{\scriptsize\color{black!30}$\uparrow$0.0} & 71.21\kern0.2pt{\scriptsize\color{black!30}$\uparrow$0.0} & 71.75\kern0.2pt{\scriptsize\color{black!30}$\uparrow$0.0} & 70.87\kern0.2pt{\scriptsize\color{black!30}$\uparrow$0.0} & 1.000 & 5.8\kern0.2pt{\scriptsize\color{black!30}$\uparrow$0.00\%} \\
ASAM & \textbf{90.61} & 67.57\kern0.2pt{\scriptsize\color{black!60}$\uparrow$0.4} & 72.78\kern0.2pt{\scriptsize\color{black!60}$\uparrow$1.6} & 75.05\kern0.2pt{\scriptsize\color{black!60}$\uparrow$2.0} & 72.78\kern0.2pt{\scriptsize\color{black!60}$\uparrow$1.6} & 73.68\kern0.2pt{\scriptsize\color{black!60}$\uparrow$1.9} & 72.37\kern0.2pt{\scriptsize\color{black!60}$\uparrow$1.5} & 1.730 & 6.1\kern0.2pt{\scriptsize\color{black!60}$\uparrow$5.67\%} \\
MergOPT & 90.05 & 68.13\kern0.2pt{\scriptsize\color{black!60}$\uparrow$0.9} & 73.15\kern0.2pt{\scriptsize\color{black!60}$\uparrow$2.0} & 74.82\kern0.2pt{\scriptsize\color{black!60}$\uparrow$1.8} & 72.91\kern0.2pt{\scriptsize\color{black!60}$\uparrow$1.7} & 73.84\kern0.2pt{\scriptsize\color{black!60}$\uparrow$2.1} & 72.57\kern0.2pt{\scriptsize\color{black!60}$\uparrow$1.7} & 1.146 & 6.1\kern0.2pt{\scriptsize\color{black!60}$\uparrow$5.67\%} \\
OrthoReg & 89.36 & 70.74\kern0.2pt{\scriptsize\color{black!60}$\uparrow$3.5} & 72.82\kern0.2pt{\scriptsize\color{black!60}$\uparrow$1.7} & 74.37\kern0.2pt{\scriptsize\color{black!60}$\uparrow$1.3} & 72.85\kern0.2pt{\scriptsize\color{black!60}$\uparrow$1.6} & 70.11\kern0.2pt{\scriptsize\color{black!60}$\downarrow$1.6} & 72.18\kern0.2pt{\scriptsize\color{black!60}$\uparrow$1.3} & 1.125 & 6.5\kern0.2pt{\scriptsize\color{black!60}$\uparrow$13.8\%} \\
\rowcolor{citationblue!7}
\textbf{SMAT} & 89.27 & \textbf{73.83}\kern0.2pt{\scriptsize\color{black!60}$\uparrow$6.6} & \textbf{74.18}\kern0.2pt{\scriptsize\color{black!60}$\uparrow$3.1} & \textbf{75.31}\kern0.2pt{\scriptsize\color{black!60}$\uparrow$2.3} & \textbf{74.18}\kern0.2pt{\scriptsize\color{black!60}$\uparrow$3.0} & \textbf{74.81}\kern0.2pt{\scriptsize\color{black!60}$\uparrow$3.1} & \textbf{74.46}\kern0.2pt{\scriptsize\color{black!60}$\uparrow$3.6} & 1.010 & 6.1\kern0.2pt{\scriptsize\color{black!60}$\uparrow$5.65\%} \\
\midrule
\multicolumn{10}{@{}c@{}}{\rule[-0.6ex]{0pt}{3.1ex}\textit{CLIP ViT-L/14}} \\
\midrule
FT & 93.97 & 80.27\kern0.2pt{\scriptsize\color{black!30}$\uparrow$0.0} & 84.97\kern0.2pt{\scriptsize\color{black!30}$\uparrow$0.0} & 85.51\kern0.2pt{\scriptsize\color{black!30}$\uparrow$0.0} & 84.95\kern0.2pt{\scriptsize\color{black!30}$\uparrow$0.0} & 84.29\kern0.2pt{\scriptsize\color{black!30}$\uparrow$0.0} & 84.00\kern0.2pt{\scriptsize\color{black!30}$\uparrow$0.0} & 1.000 & 65.1\kern0.2pt{\scriptsize\color{black!30}$\uparrow$0.00\%} \\
ASAM & \textbf{94.28} & 81.58\kern0.2pt{\scriptsize\color{black!60}$\uparrow$1.3} & 86.45\kern0.2pt{\scriptsize\color{black!60}$\uparrow$1.5} & 87.18\kern0.2pt{\scriptsize\color{black!60}$\uparrow$1.7} & 86.14\kern0.2pt{\scriptsize\color{black!60}$\uparrow$1.2} & 85.85\kern0.2pt{\scriptsize\color{black!60}$\uparrow$1.6} & 85.44\kern0.2pt{\scriptsize\color{black!60}$\uparrow$1.4} & 2.000 & 66.2\kern0.2pt{\scriptsize\color{black!60}$\uparrow$1.73\%} \\
MergOPT & 94.02 & 80.86\kern0.2pt{\scriptsize\color{black!60}$\uparrow$0.6} & 86.38\kern0.2pt{\scriptsize\color{black!60}$\uparrow$1.4} & 87.29\kern0.2pt{\scriptsize\color{black!60}$\uparrow$1.8} & 86.38\kern0.2pt{\scriptsize\color{black!60}$\uparrow$1.4} & 85.77\kern0.2pt{\scriptsize\color{black!60}$\uparrow$1.5} & 85.34\kern0.2pt{\scriptsize\color{black!60}$\uparrow$1.3} & 1.025 & 66.2\kern0.2pt{\scriptsize\color{black!60}$\uparrow$1.73\%} \\
OrthoReg & 93.87 & 84.61\kern0.2pt{\scriptsize\color{black!60}$\uparrow$4.3} & 86.67\kern0.2pt{\scriptsize\color{black!60}$\uparrow$1.7} & 86.53\kern0.2pt{\scriptsize\color{black!60}$\uparrow$1.0} & 86.29\kern0.2pt{\scriptsize\color{black!60}$\uparrow$1.3} & 83.99\kern0.2pt{\scriptsize\color{black!60}$\downarrow$0.3} & 85.62\kern0.2pt{\scriptsize\color{black!60}$\uparrow$1.6} & 1.035 & 67.9\kern0.2pt{\scriptsize\color{black!60}$\uparrow$4.32\%} \\
\rowcolor{citationblue!7}
\textbf{SMAT} & 93.66 & \textbf{87.05}\kern0.2pt{\scriptsize\color{black!60}$\uparrow$6.8} & \textbf{87.72}\kern0.2pt{\scriptsize\color{black!60}$\uparrow$2.7} & \textbf{88.21}\kern0.2pt{\scriptsize\color{black!60}$\uparrow$2.7} & \textbf{87.69}\kern0.2pt{\scriptsize\color{black!60}$\uparrow$2.7} & \textbf{88.21}\kern0.2pt{\scriptsize\color{black!60}$\uparrow$3.9} & \textbf{87.78}\kern0.2pt{\scriptsize\color{black!60}$\uparrow$3.8} & 1.002 & 66.2\kern0.2pt{\scriptsize\color{black!60}$\uparrow$1.73\%} \\
\arrayrulecolor{black}
\bottomrule
\end{tabularx}
\smatrestorerulecolor
\par\vspace{4pt}
\begin{minipage}{\linewidth}
\footnotesize
Expert: mean score before merging. Avg.: mean over five mergers.
Mem.: peak GPU memory. Arrows: change from FT in score points or memory \%.
\end{minipage}
\end{table}

%% file: tables/muon_results.tex

\centering
\setlength{\abovecaptionskip}{3pt}
\captionof{table}{TRACE with Muon on Llama-1B. Bold marks the best score.}
\label{tab:muon-results}
\vspace{2pt}
\smatresultsetup
\setlength{\tabcolsep}{3pt}
\begin{tabularx}{\linewidth}{lGcG>{\centering\arraybackslash}XGc}
\arrayrulecolor{black}
\toprule
\arrayrulecolor{black!60}
Method & Expert $\uparrow$ & \textbf{Avg.} $\uparrow$ & Time $\downarrow$\,\smatcostunit{($\times$FT)} \\
\midrule
FT & \textbf{55.31} & 38.25\kern0.2pt{\scriptsize\color{black!30}$\uparrow$0.0} & 1.00 \\
ASAM & 54.69 & 40.34\kern0.2pt{\scriptsize\color{black!60}$\uparrow$2.1} & 1.66 \\
MergOPT & 54.94 & 40.22\kern0.2pt{\scriptsize\color{black!60}$\uparrow$2.0} & 1.25 \\
OrthoReg & 50.89 & 43.41\kern0.2pt{\scriptsize\color{black!60}$\uparrow$5.2} & 2.87 \\
\rowcolor{citationblue!7}
\textbf{SMAT} & 55.07 & \textbf{44.15}\kern0.2pt{\scriptsize\color{black!60}$\uparrow$5.9} & 1.00 \\
\arrayrulecolor{black}
\bottomrule
\end{tabularx}
\smatrestorerulecolor
\par\vspace{4pt}
\begin{minipage}{\linewidth}
\footnotesize
\raggedright
Avg.: mean over five mergers. Arrows: change from FT.
Time: training time relative to FT.
\end{minipage}

%% file: figures/expert_analyses.tex
\begin{figure}[!t]
\centering
\includegraphics[width=\linewidth]{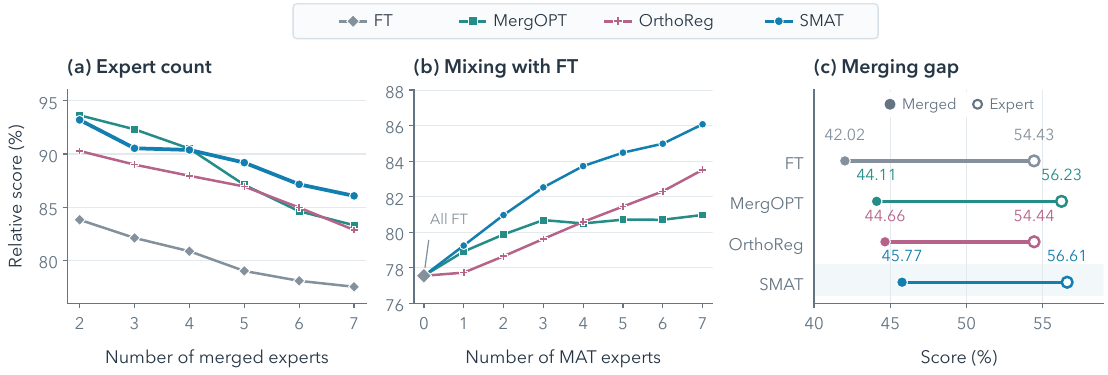}
\caption{Llama-1B expert merging with AdamW: (a) varying expert count,
(b) mixing MAT and FT experts, and (c) expert scores (open) versus mean
merged scores (filled). Relative scores in (a,b) are normalized to FT experts.}
\label{fig:expert-analyses}
\label{fig:expert-count}
\label{fig:expert-mixtures}
\label{fig:merge-gain-decomposition}
\end{figure}

%% file: figures/loss_slices.tex
\begin{figure}[!t]\centering
\includegraphics[width=\linewidth]{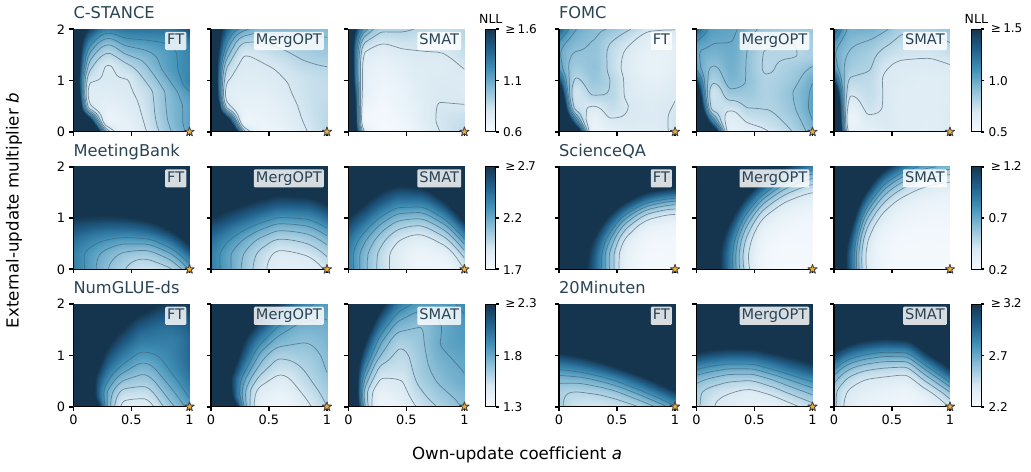}
\caption{Loss slices for six tasks using the experts in Table~\ref{tab:llm-results}.
Columns are FT, MergOPT, and SMAT; stars mark independent experts.
Each task shares one NLL color scale across methods; scales differ across
tasks, and contours are spaced by 0.1.}
\label{fig:optimization-landscape}\end{figure}

%% file: tables/ablation_summary.tex
\centering
\setlength{\abovecaptionskip}{0pt}
\captionof{table}{Component ablation on Llama-1B with AdamW. Avg. averages five mergers; Change is relative to Full SMAT.}
\label{tab:smp-ablation}
\small
\setlength{\tabcolsep}{3pt}
\renewcommand{\arraystretch}{1.0}
\begin{tabularx}{\linewidth}{@{}l>{\raggedleft\arraybackslash}Xr@{}}
\toprule
Variant & \textbf{Avg.} $\uparrow$ & Change \\
\midrule
\rowcolor{citationblue!7}
\textbf{Full SMAT} & $45.77$ & -- \\
w/o Scale & $44.52$ & $-1.25$ \\
w/o Mask & $44.70$ & $-1.07$ \\
w/o Perturb & $43.67$ & $-2.10$ \\
\bottomrule
\end{tabularx}

%% file: sections/06_conclusion.tex
\section{Conclusion}
\label{sec:conclusion}

We introduced SMAT, a simple and efficient merge-aware training method that prepares independently trained experts for model merging by simulating three common operations, i.e., Scale, Mask, and Perturb, with efficient periodic training.
Across language and vision-language models, SMAT consistently improves merged performance across multiple merging methods with less than 2\% training-time overhead.
Our analyses further reveal broader low-loss regions along merge-relevant directions, supporting loss smoothing as a contributing mechanism to SMAT's merge robustness, while Scale and Mask provide complementary gains beyond Perturb.
Overall, our results highlight merge-induced parameter variation as a useful training signal for building experts that remain effective after merging.

%% file: sections/07_statements.tex
\section*{AI use statement}

In this work, we used generative AI tools to assist with language editing
of the manuscript, including polishing wording, improving readability, and
correcting grammar. We also used generative AI tools to assist in drafting
code for functional testing. All AI-assisted text was reviewed and revised
by the authors, and all AI-assisted code was manually inspected and tested
for correctness before use. The authors take full responsibility for the
final content of the manuscript and the correctness of the code used in
this work.

\section*{Reproducibility statement}

Our code is available at
\url{https://github.com/egangu/smat}.
Section~\ref{sec:method} describes SMAT and its efficient implementation.
Section~\ref{sec:experimental-setup} and
Appendix~\ref{app:current-main-settings} provide the experimental settings,
while Appendix~\ref{app:implementation-measurement} describes the
implementation and measurement procedures.
Proofs and their assumptions are provided in
Appendices~\ref{app:perturb-smoothing} and~\ref{app:training-algorithm}.

%% file: sections/appendix/01_experimental_settings.tex
\section{Additional Experimental Settings}
\label{app:current-main-settings}
This appendix specifies the settings for the language and vision experiments
in Tables~\ref{tab:llm-results} and~\ref{tab:vit-results}.

\paragraph{Training.}
For both language backbones, we follow MergOPT~\citep{yang2026mergopt} for
the optimizer, learning rate, batch size, weight decay, and task-specific
epoch counts. We use AdamW with batch size 8, learning rate $2\times10^{-5}$,
a constant schedule, weight decay $10^{-3}$, gradient clipping at 1, and a
maximum length of 512.

The epoch counts for C-STANCE, FOMC, MeetingBank, ScienceQA, NumGLUE-cm,
NumGLUE-ds, and 20Minuten are 5, 3, 7, 3, 5, 5, and 7, respectively. We append
EOS before truncation and supervise it when it is retained.

For both ViT backbones, we use the public CLIP fine-tuning configuration from
FusionBench~\citep{tang2025fusionbench}. Vision models use Adam with learning
rate $10^{-5}$, cosine decay without warmup, zero weight decay, batch size
128, and 4,000 updates per task. The text encoder, projection, and
classification heads remain frozen.

Table~\ref{tab:baseline-training-settings} lists the ASAM~\citep{kwon2021asam} radius $\rho$, the
MergOPT~\citep{yang2026mergopt} Laplace scale $b_{\mathrm{MergOPT}}$, and the
OrthoReg~\citep{liu2026orthoreg} regularization weight $\lambda_{\mathrm{reg}}$.
\begin{table}[htbp]
\centering
\small
\caption{Baseline training hyperparameters for the four backbones.}
\label{tab:baseline-training-settings}
\begin{tabular}{lrrr}
\toprule
Backbone & ASAM radius $\rho$ & MergOPT Laplace scale $b_{\mathrm{MergOPT}}$ & OrthoReg $\lambda_{\mathrm{reg}}$ \\
\midrule
Llama-1B & 2 & $5\times10^{-4}$ & 0.03 \\
Llama-8B & 0.25 & $2.5\times10^{-4}$ & 0.01 \\
ViT-B/32 & 0.25 & $2\times10^{-4}$ & 0.1 \\
ViT-L/14 & 0.125 & $5\times10^{-5}$ & 0.05 \\
\bottomrule
\end{tabular}
\end{table}

\paragraph{SMAT settings.}
For task $i$, SMAT samples one scalar Scale coefficient
$\alpha_i\sim\mathcal{U}[\alpha_{\min},1]$ for all trainable coordinates. It
uses a block-linear Mask with probability $p=0.5$, excludes embeddings from
masking, and simulates merging every four steps.
Both language backbones use $(\alpha_{\min},\sigma)=(0.2,0.002)$, and both
vision backbones use $(0.1,0.001)$, where $\alpha_{\min}$ is the Scale lower
bound and $\sigma$ is the Perturb RMS.

\paragraph{Data splits.}
To construct training, dev, and final evaluation splits, we group examples
by normalized prompt or RGB-image identity,
remove groups present in the final evaluation split from training and dev,
and remove dev groups from training. The three splits therefore have no
overlapping examples under this exact-identity definition.

Language dev sizes are 200, 190, 200, 200, 50, 102, and 200 in the task order
above. Vision uses 200 examples per task, including 200 of the 397 SUN397
classes. We evaluate language tasks on their released eval splits and vision
tasks on their test splits; NumGLUE-cm has 41 evaluation examples.

\paragraph{Sparse-merger calibration.}
We calibrate sparsity on dev data using FT on Llama-1B and ViT-B/32. For
TIES~\citep{yadav2023ties}, retention is selected from
$\{0.1,0.2,0.3,0.5\}$. For DARE~\citep{yu2024dare} and
DELLA~\citep{deep2024della}, drop probability is selected from
$\{0.1,0.3,0.5,0.7,0.9\}$; the DELLA window is fixed at 0.14.

Reference merging coefficients are 0.3 for language TIES/DARE, 0.2 for vision
TIES/DARE, and 1.0 for DELLA; ties favor less pruning. Each selected sparse
setting transfers across methods and model sizes within its family, after
which each expert set selects its own merging coefficient.

TIES retention is 0.5 and DARE drop probability is 0.3 for both families.
DELLA drop probability is 0.5 for language and 0.1 for vision.

\Needspace{6\baselineskip}
\paragraph{Decoding.}
For TRACE dev selection, each candidate gets one decoding run; the top two get
two more runs on MeetingBank, ScienceQA, and 20Minuten, reusing deterministic-task
scores. Selection uses the three-run mean, including for sparse calibration.
Table differences are computed before rounding.

%% file: sections/appendix/02_analysis_settings.tex
\section{Additional Analysis Settings}
\label{app:calibration}

\input{sections/appendix/05_expert_count}

\subsection{Muon Settings and Complete Results}
\label{app:muon-refinement}

\input{tables/muon_results_full}

\paragraph{Training and evaluation.}
All methods share a matrix learning rate of $4\times10^{-4}$ and AdamW at
$2\times10^{-5}$ for auxiliary parameters. SMAT uses scalar
Scale with $\alpha_{\min}=0.2$, Mask probability $p=0.5$, Perturb RMS 0.002,
and interval $t=4$; embeddings are excluded from masking.

We select merger configurations on dev data. All methods
share seven-task data splits and budgets, with batch size 8, maximum length
512, and dynamic padding. Table~\ref{tab:muon-results-full} gives complete results;
Appendix~\ref{app:implementation-measurement} details hardware and cost measurements.

\FloatBarrier

%% file: sections/appendix/05_expert_count.tex
\subsection{Expert-count Evaluation}
\label{app:expert-count-details}

Figure~\ref{fig:expert-analyses}(a) merges the experts from
Table~\ref{tab:llm-results} using Task Arithmetic~\citep{ilharco2023taskarithmetic} (TA).
For each expert count $K=2,\ldots,6$, seven cyclic subsets of a fixed task
ordering balance task inclusion; $K=7$ uses all tasks. Methods share these
subsets and three decoding repeats, and each merged model is evaluated only on
its included tasks. We divide each task score by its independent FT expert score, average over
tasks and subsets, and multiply by 100. 

%% file: tables/muon_results_full.tex
\begin{table}[htbp]
\centering
\caption{Muon results on Llama-3.2-1B-Instruct. Avg. is the mean over five merger methods.}
\label{tab:muon-results-full}
\small
\setlength{\tabcolsep}{2pt}
\renewcommand{\arraystretch}{1.08}
\begin{tabular*}{\linewidth}{@{\extracolsep{\fill}}lrrrrrrr@{}}
\toprule
\multirow{2}{*}{Method} & \multirow{2}{*}{Expert}
& \multicolumn{5}{c}{Merged} & \multirow{2}{*}{Avg.} \\
\cmidrule(lr){3-7}
 & & WA & TA & TIES & DARE & DELLA & \\
\midrule
FT & \textbf{55.31} & 37.06 & 40.46 & 38.53 & 40.92 & 34.26 & 38.25 \\
ASAM & 54.69 & 39.50 & 41.35 & 41.65 & 41.89 & 37.32 & 40.34 \\
MergOPT & 54.94 & 38.07 & 42.52 & 41.90 & 42.02 & 36.59 & 40.22 \\
OrthoReg & 50.89 & 42.60 & 44.73 & 43.98 & 45.42 & \textbf{40.35} & 43.41 \\
\rowcolor{citationblue!7}
\textbf{SMAT} & 55.07 & \textbf{42.77} & \textbf{46.27} & \textbf{46.44} & \textbf{45.72} & 39.55 & \textbf{44.15} \\
\bottomrule
\end{tabular*}\par\vspace{6pt}
\end{table}

%% file: sections/appendix/03_theory.tex
\section{Theory of Simulated Merging}
\label{app:perturb-smoothing}

Fix expert $i$ and the independent draws in Eq.~\eqref{eq:smat-state}, with
coordinate masking probabilities $0\leq p_k<1$ from Eq.~\eqref{eq:smat-mask}.
We use Euclidean vector norms $\|\cdot\|_2$ and operator matrix norms
$\|\cdot\|_{\mathrm{op}}$; expectations average the SMAT draws unless specified.

\begin{proposition}[Moments of the simulated state]
\label{prop:simulated-state-moments}
Let $\bar\alpha:=\mathbb{E}[\alpha_i]=(1+\alpha_{\min})/2$,
$v_\alpha:=\operatorname{Var}(\alpha_i)=(1-\alpha_{\min})^2/12$, and
$s_\alpha:=\mathbb{E}[\alpha_i^2]=\bar\alpha^2+v_\alpha$.  The mean
$\bar\theta_i$ and covariance $C_i$ of the simulated state are
\begin{equation}
  \begin{aligned}
  \bar\theta_i:=\mathbb{E}[\widetilde\theta_i]
  &=\theta_0+\bar\alpha\Delta_i,\\
  C_i:=\operatorname{Cov}(\widetilde\theta_i)
  &=v_\alpha\Delta_i\Delta_i^\top
  +s_\alpha\operatorname{Diag}\!\left(
    \Delta_{i,k}^2\frac{p_k}{1-p_k}\right)+\sigma^2 I_d.
  \end{aligned}
  \label{eq:simulated-state-moments}
\end{equation}
Here $\operatorname{Diag}$ forms a diagonal matrix, $I_d$ is the $d\times d$ identity
matrix, $^\top$ denotes transpose, and $\operatorname{Var}$ and
$\operatorname{Cov}$ denote variance and covariance.
\end{proposition}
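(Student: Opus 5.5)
We prove Proposition~\ref{prop:simulated-state-moments} by a direct moment computation on the decomposition $\widetilde\theta_i=\theta_0+\alpha_i\,\mathbf m_i\odot\Delta_i+\epsilon_i$ from Eq.~\eqref{eq:smat-state}. Throughout, $\theta_0$ and $\Delta_i$ are treated as fixed, since we condition on the current expert parameters, and $\alpha_i$, $\mathbf m_i$, $\epsilon_i$ are mutually independent. First we record the elementary moments. For $\alpha_i\sim\mathcal U[\alpha_{\min},1]$ we have $\bar\alpha=(1+\alpha_{\min})/2$ and $v_\alpha=(1-\alpha_{\min})^2/12$. From Eq.~\eqref{eq:smat-mask}, $\mathbb E[m_{i,k}]=1$ and $\mathbb E[m_{i,k}^2]=(1-p_k)/(1-p_k)^2=1/(1-p_k)$, so $\operatorname{Var}(m_{i,k})=p_k/(1-p_k)$. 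The mask entries are independent across $k$. For $\epsilon_i$, the coordinates are independent, uniform on $[-\sqrt3\sigma,\sqrt3\sigma]$, with mean zero and variance $\sigma^2$, so $\operatorname{Cov}(\epsilon_i)=\sigma^2 I_d$.

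For the mean, independence of $\alpha_i$ and $\mathbf m_i$ gives $\mathbb E[\alpha_i\mathbf m_i\odot\Delta_i]=\bar\alpha\,\mathbb E[\mathbf m_i]\odot\Delta_i=\bar\alpha\Delta_i$. Together with $\mathbb E[\epsilon_i]=0$, this yields $\bar\theta_i=\theta_0+\bar\alpha\Delta_i$.

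For the covariance, write $X:=\alpha_i\mathbf m_i\odot\Delta_i$. Because $X$ is a function of $(\alpha_i,\mathbf m_i)$ and is independent of $\epsilon_i$, the cross-covariance vanishes, and $\operatorname{Cov}(\widetilde\theta_i)=\operatorname{Cov}(X)+\sigma^2 I_d$. It remains to compute $\mathbb E[X_kX_l]=\mathbb E[\alpha_i^2]\,\mathbb E[m_{i,k}m_{i,l}]\,\Delta_{i,k}\Delta_{i,l}$. For $k\neq l$, independence of the mask entries gives $\mathbb E[m_{i,k}m_{i,l}]=1$. For $k=l$, it equals $1/(1-p_k)=1+p_k/(1-p_k)$. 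Hence
\[
\mathbb E[XX^\top]=s_\alpha\Delta_i\Delta_i^\top+s_\alpha\operatorname{Diag}\!\left(\Delta_{i,k}^2\frac{p_k}{1-p_k}\right).
\]
Subtracting $\mathbb E[X]\mathbb E[X]^\top=\bar\alpha^2\Delta_i\Delta_i^\top$ and using $s_\alpha-\bar\alpha^2=v_\alpha$ gives the stated $C_i$.

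No real obstacle arises, since every step is a finite computation. The only point needing care is the diagonal term. Its coefficient is $s_\alpha$, not $\bar\alpha^2$, because the mask variance multiplies the random $\alpha_i^2$ before the expectation is taken. We also need $p_k<1$, so that $1/(1-p_k)$ is finite; coordinates with $p_k=0$ contribute zero to the diagonal term.
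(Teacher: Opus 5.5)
Your proof is correct and follows the same route as the paper's proof, which likewise uses $\mathbb{E}[m_{i,k}]=1$ and $\mathbb{E}[m_{i,k}^2]=1/(1-p_k)$, attributes the rank-one term to the shared scalar $\alpha_i$ and the diagonal term to independent masking, and obtains $\sigma^2 I_d$ from independent centered Perturb. You simply carry out the computation of $\mathbb{E}[XX^\top]$ explicitly, including the $s_\alpha$ (rather than $\bar\alpha^2$) coefficient on the diagonal term, which the paper only sketches.
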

\noindent\textit{Proof.}
For each coordinate, $\mathbb{E}[m_{i,k}]=1$ and
$\mathbb{E}[m_{i,k}^2]=1/(1-p_k)$. The scalar $\alpha_i$ is shared across
coordinates, producing the first covariance term. Independent masking
produces the diagonal term, and centered independent Perturb produces
$\sigma^2 I_d$.\hfill$\square$

\paragraph{Local loss expansion.}
Let $Z:=\widetilde\theta_i-\bar\theta_i$ be the centered displacement, and
write $H_i(\theta):=\nabla^2\mathcal{L}_i(\theta)$ for the loss Hessian.
Assume $\mathcal{L}_i$ is twice differentiable and $H_i$ is $L_H$-Lipschitz
on the convex hull of the support of $\widetilde\theta_i$, meaning
$\|H_i(\theta)-H_i(\theta')\|_{\mathrm{op}}\leq
L_H\|\theta-\theta'\|_2$.  Then
\begin{equation}
 \mathbb{E}\mathcal{L}_i(\widetilde\theta_i)
 =\mathcal{L}_i(\bar\theta_i)
 +\frac12\operatorname{tr}\!\left[H_i(\bar\theta_i)C_i\right]+r_i,
 \qquad |r_i|\leq\frac{L_H}{6}\mathbb{E}\|Z\|_2^3.
 \label{eq:simulated-joint-expansion}
\end{equation}
Here $\operatorname{tr}$ is the matrix trace and $r_i$ the third-order Taylor
remainder. Since $\mathbb{E}Z=0$, the linear term vanishes. For positive
semidefinite $H_i(\bar\theta_i)\succeq0$, the trace term penalizes curvature.

\paragraph{Perturb smooths the conditional loss.}
Condition on Scale and Mask, and write
$\theta_\star:=\widetilde\theta_i-\epsilon_i$. For $\sigma>0$, let
$q_\sigma$ be the symmetric Perturb density. The conditional smoothed loss is
\begin{equation}
  \overline{\mathcal{L}}_{i,\sigma}(\theta_\star)
  :=\mathbb{E}_{\epsilon\sim q_\sigma}[\mathcal{L}_i(\theta_\star+\epsilon)]
  =(\mathcal{L}_i*q_\sigma)(\theta_\star),
  \label{eq:perturb-convolution}
\end{equation}
where $*$ denotes convolution. Write $\epsilon=\sigma z$, where $z$ is
standardized noise independent of $\sigma$, with $\mathbb{E}_{z}[z]=0$,
$\mathbb{E}_{z}[zz^\top]=I_d$, and $\mathbb{E}_{z}\|z\|_2^3<\infty$. If
$H_i$ is $L_H$-Lipschitz on every sampled segment from $\theta_\star$ to
$\theta_\star+\sigma z$, then, for sufficiently small $\sigma$,
\begin{equation}
 \overline{\mathcal{L}}_{i,\sigma}(\theta_\star)
 =\mathcal{L}_i(\theta_\star)+\frac{\sigma^2}{2}\operatorname{tr}H_i(\theta_\star)+r_\sigma,
 \qquad
 |r_\sigma|\leq\frac{L_H\sigma^3}{6}\mathbb{E}_{z}\|z\|_2^3=o(\sigma^2).
 \label{eq:perturb-smoothing-expansion}
\end{equation}
Here $r_\sigma$ is the Taylor remainder; $o(\sigma^2)/\sigma^2\to0$ as
$\sigma\to0$. Independent Uniform, Gaussian, and Laplace noise with equal
coordinate variance thus share the leading term, but this expansion does not
rank them at finite noise scales.

\Needspace{10\baselineskip}
\begin{proposition}[Loss difference from a target merge distribution]
\label{prop:simulated-merge-moment-bridge}
Let $P_i$ be a target distribution of merged states (Eq.~\eqref{eq:merged-state})
and $Q_i$ the distribution of $\widetilde\theta_i$. Relative to reference
$\theta_\star$, define the first and second displacement moments and third absolute moment
of a random parameter $\Theta\sim D$:
\begin{equation}
  \begin{aligned}
    a_D&:=\mathbb{E}_{\Theta\sim D}[\Theta-\theta_\star],\\
    S_D&:=\mathbb{E}_{\Theta\sim D}[(\Theta-\theta_\star)(\Theta-\theta_\star)^\top],\\
    \tau_D&:=\mathbb{E}_{\Theta\sim D}\|\Theta-\theta_\star\|_2^3,
    \qquad D\in\{P_i,Q_i\}.
  \end{aligned}
  \label{eq:merge-distribution-moments}
\end{equation}
Assume $\tau_{P_i}$ and $\tau_{Q_i}$ are finite and that $H_i$ is
$L_H$-Lipschitz on the convex hull of
$\{\theta_\star\}\cup\operatorname{supp}(P_i)\cup\operatorname{supp}(Q_i)$,
where $\operatorname{supp}$ denotes support. Then
\begin{equation}
 \begin{aligned}
 &\mathbb{E}_{\Theta\sim P_i}\mathcal{L}_i(\Theta)
 -\mathbb{E}_{\Theta\sim Q_i}\mathcal{L}_i(\Theta)\\
 &\quad=\nabla\mathcal{L}_i(\theta_\star)^\top(a_{P_i}-a_{Q_i})
 +\frac12\operatorname{tr}\!\left[H_i(\theta_\star)(S_{P_i}-S_{Q_i})\right]+r_{P,Q},\\
 &|r_{P,Q}|\leq\frac{L_H}{6}(\tau_{P_i}+\tau_{Q_i}).
 \end{aligned}
 \label{eq:simulated-merge-moment-bridge}
\end{equation}
\end{proposition}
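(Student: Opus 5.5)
We expand $\mathcal L_i$ to second order about $\theta_\star$ with an integral-form remainder controlled by the Lipschitz Hessian, take expectations under each of $P_i$ and $Q_i$ separately, and subtract.

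\emph{Step 1 (pointwise expansion).} For any $\theta$ in the stated convex hull $K$, set $h:=\theta-\theta_\star$. Since $\theta_\star\in K$ and $K$ is convex, the segment from $\theta_\star$ to $\theta$ lies in $K$. Taylor's theorem with integral remainder gives
\begin{equation*}
 \mathcal L_i(\theta)=\mathcal L_i(\theta_\star)+\nabla\mathcal L_i(\theta_\star)^\top h+\tfrac12 h^\top H_i(\theta_\star)h+R(\theta),
\end{equation*}
where
\begin{equation*}
 R(\theta)=\int_0^1(1-t)\,h^\top\bigl[H_i(\theta_\star+th)-H_i(\theta_\star)\bigr]h\,dt .
\end{equation*}
The Lipschitz assumption bounds the integrand by $L_H t\|h\|_2^3$. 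Since $\int_0^1(1-t)t\,dt=1/6$, we get $|R(\theta)|\le \tfrac{L_H}{6}\|h\|_2^3$. This is the same estimate that underlies Eq.~\eqref{eq:simulated-joint-expansion}; the only difference is that we now expand around an arbitrary $\theta_\star$ rather than the mean, so the linear term does not vanish.

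\emph{Step 2 (expectations).} Integrate the expansion against $D\in\{P_i,Q_i\}$. We use $h^\top H h=\operatorname{tr}[H hh^\top]$ together with linearity of trace and expectation. This yields
\begin{equation*}
 \mathbb{E}_D\mathcal L_i=\mathcal L_i(\theta_\star)+\nabla\mathcal L_i(\theta_\star)^\top a_D+\tfrac12\operatorname{tr}[H_i(\theta_\star)S_D]+r_D,
\end{equation*}
with $|r_D|\le \tfrac{L_H}{6}\tau_D$.

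\emph{Step 3 (subtract).} Subtracting the $Q_i$ identity from the $P_i$ identity cancels the constant $\mathcal L_i(\theta_\star)$. The remaining terms give Eq.~\eqref{eq:simulated-merge-moment-bridge} with $r_{P,Q}=r_{P_i}-r_{Q_i}$. The triangle inequality then gives $|r_{P,Q}|\le\tfrac{L_H}{6}(\tau_{P_i}+\tau_{Q_i})$.

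\emph{Main obstacle: integrability in Step 2.} Each term must be finite before we take expectations and split them apart. The cubic remainder is integrable because $\tau_D<\infty$. Finiteness of $\tau_D$ also implies, by Jensen/Lyapunov, that $\mathbb{E}\|h\|_2$ and $\mathbb{E}\|h\|_2^2$ are finite. Hence $a_D$ and $S_D$ exist, and the linear and quadratic terms are integrable. It follows that $\mathcal L_i(\Theta)$ is integrable under each $D$, so the expectations on the left-hand side are well defined and the subtraction is legitimate.

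The supports of $P_i$ and $Q_i$ lie in $K$ by assumption, so every segment used in Step 1 stays inside the region where the Lipschitz bound on $H_i$ holds. No further conditions are required.
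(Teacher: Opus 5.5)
Your proof is correct and follows the same route as the paper. The paper likewise takes a second-order Taylor expansion of $\mathcal{L}_i$ at $\theta_\star$ with a Lipschitz-Hessian cubic remainder, takes expectations separately under $P_i$ and $Q_i$, and subtracts, giving $r_{P,Q}=r_{P_i}-r_{Q_i}$. The paper's one-line proof leaves two details implicit that you spell out: the remainder constant $\int_0^1(1-t)t\,dt=1/6$, and the integrability argument based on $\tau_D<\infty$.
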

\noindent\textit{Proof of Proposition~\ref{prop:simulated-merge-moment-bridge}.}
Apply Taylor's theorem at $\theta_\star$ under $P_i$ and $Q_i$, then subtract.
Here $r_{P,Q}$ is the combined third-order Taylor remainder. The first term
retains directional or nonzero-mean updates from other experts, so the result
does not assume matching moments.\hfill$\square$

This expected-loss comparison does not guarantee higher task scores.

%% file: sections/appendix/04_training_algorithm.tex
\Needspace{6\baselineskip}
\section{Training and Scheduling Details}
\label{app:training-algorithm}

\Needspace{8\baselineskip}
\paragraph{Proof of Proposition~\ref{prop:periodic-approximation}.}
Let $x_j$ and $y_j$ be the separate-loss and mixed-loss gradient-descent iterates,
respectively, with $x_0=y_0=w$. Let $s_j\in\{A,B\}$ specify the loss
at separate-loss step $j$, with exactly $k$ occurrences of $B$ among
$j=0,\ldots,n-1$. The updates are
\[
 x_{j+1}=x_j-\eta\nabla\mathcal L_{s_j}(x_j),\qquad
 y_{j+1}=y_j-\eta\nabla J(y_j).
\]
Because $J$ is a convex combination of the two losses, its gradient is also
$L_g$-Lipschitz and bounded by $G$; the losses themselves need not be convex.
Thus $\|x_j-w\|_2\leq j\eta G$ and $\|y_j-w\|_2\leq j\eta G$.
The component counts give
$\sum_{j=0}^{n-1}\nabla\mathcal L_{s_j}(w)=n\nabla J(w)$, so
\[
\begin{aligned}
 \|x_n-(w-n\eta\nabla J(w))\|_2
 &\leq\eta\sum_{j=0}^{n-1}
   \|\nabla\mathcal L_{s_j}(x_j)-\nabla\mathcal L_{s_j}(w)\|_2\\
 &\leq\eta L_g\sum_{j=0}^{n-1}j\eta G
 =\frac{\eta^2L_gG\,n(n-1)}{2}.
\end{aligned}
\]
The same bound holds for $y_n$. The triangle inequality gives
Eq.~\eqref{eq:periodic-approximation}.\hfill$\square$

This is a local comparison over one cycle: for fixed $n$, the endpoint
difference is second order in $\eta$. It does not require convex losses
or imply that the two full training trajectories coincide. The result is
for ordinary gradient descent; AdamW's adaptive moments and Muon's momentum
and matrix transformation require a separate analysis.

\paragraph{Cycle-averaged gradients in SMAT.}
The $t-1$ ordinary updates and one simulated-loss update per period match
the objective weight $\lambda=1/t$. The following bound applies to the
population gradients along any parameter path with bounded step lengths,
without assuming a particular optimizer.

\Needspace{10\baselineskip}
\begin{proposition}[Drift within a training period]
\label{prop:periodic-drift}
Let $\zeta_i:=(\alpha_i,\mathbf m_i,\epsilon_i)$ denote an SMAT draw, and
write $\widetilde\theta_i(\theta,\zeta_i)$ for the state in
Eq.~\eqref{eq:smat-state} at current expert parameters $\theta$. Define the
clean and SMAT population gradient fields as
\begin{equation}
  g_{\mathrm c}(\theta):=\nabla\mathcal{L}_i(\theta),\qquad
  g_{\mathrm s}(\theta):=\nabla_\theta\mathbb{E}_{\zeta_i}
  \big[\mathcal{L}_i(\widetilde\theta_i(\theta,\zeta_i))\big].
  \label{eq:periodic-gradient-fields}
\end{equation}
Let $\vartheta_0,\ldots,\vartheta_{t-1}$ be the parameter values in one
period, with $\|\vartheta_{j+1}-\vartheta_j\|_2\leq\delta$ for
$j=0,\ldots,t-2$. Let
$c_j=\mathrm c$ for $0\leq j<t-1$ and $c_{t-1}=\mathrm s$. Suppose both
fields are $L_g$-Lipschitz along this path: for
$h\in\{g_{\mathrm c},g_{\mathrm s}\}$,
$\|h(\theta)-h(\theta')\|_2\leq L_g\|\theta-\theta'\|_2$. Then
\begin{equation}
\left\|\frac{1}{t}\sum_{j=0}^{t-1}g_{c_j}(\vartheta_j)
-\left((1-\tfrac1t)g_{\mathrm c}(\vartheta_0)
+\tfrac1t g_{\mathrm s}(\vartheta_0)\right)\right\|_2
\leq \frac{L_g\delta(t-1)}{2}.
\label{eq:periodic-drift}
\end{equation}
\end{proposition}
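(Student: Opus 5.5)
Since every term is compared with the fields at $\vartheta_0$, I would work with the differences $g_{c_j}(\vartheta_j)-g_{c_j}(\vartheta_0)$. The target $(1-\tfrac1t)g_{\mathrm c}(\vartheta_0)+\tfrac1t g_{\mathrm s}(\vartheta_0)$ is exactly $\frac1t\sum_{j=0}^{t-1}g_{c_j}(\vartheta_0)$. This is because the labels $c_j$ contain $t-1$ copies of $\mathrm c$ and one copy of $\mathrm s$. The counting identity plays the role that $\sum_j\nabla\mathcal L_{s_j}(w)=n\nabla J(w)$ played in the proof of Proposition~\ref{prop:periodic-approximation}. With it, the left-hand side of Eq.~\eqref{eq:periodic-drift} becomes
\[
\Bigl\|\frac1t\sum_{j=0}^{t-1}\bigl(g_{c_j}(\vartheta_j)-g_{c_j}(\vartheta_0)\bigr)\Bigr\|_2 .
\]

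Next I would apply the triangle inequality and the Lipschitz assumption for each field $h\in\{g_{\mathrm c},g_{\mathrm s}\}$. The assumption is stated along the path, so it applies to the pair $(\vartheta_j,\vartheta_0)$. This gives $\|g_{c_j}(\vartheta_j)-g_{c_j}(\vartheta_0)\|_2\le L_g\|\vartheta_j-\vartheta_0\|_2$. Telescoping the step bound then gives $\|\vartheta_j-\vartheta_0\|_2\le j\delta$. Summing over $j$ yields at most $\frac{L_g\delta}{t}\sum_{j=0}^{t-1}j=\frac{L_g\delta}{t}\cdot\frac{t(t-1)}{2}=\frac{L_g\delta(t-1)}{2}$, which is exactly the claimed bound.

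The only delicate point is how the Lipschitz hypothesis is read. I would interpret ``along this path'' as covering all pairs of points among $\vartheta_0,\ldots,\vartheta_{t-1}$, or the polygonal path through them. If it only held for consecutive pairs, I would instead telescope the field values, $h(\vartheta_j)-h(\vartheta_0)=\sum_{l<j}\bigl(h(\vartheta_{l+1})-h(\vartheta_l)\bigr)$. Each of those terms is bounded by $L_g\delta$, so the same $j\delta$ estimate and the same final constant result.

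No expectation or differentiation interchange is needed, because $g_{\mathrm s}$ enters only through its assumed Lipschitz property. The argument is therefore independent of the optimizer, as stated.
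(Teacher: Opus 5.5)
Your proposal is correct and matches the paper's proof: the paper also writes the target as $\frac1t\sum_{j}g_{c_j}(\vartheta_0)$, applies the triangle inequality and the Lipschitz bound to get $t^{-1}L_g\sum_{j=0}^{t-1}\|\vartheta_j-\vartheta_0\|_2$, and then uses $\|\vartheta_j-\vartheta_0\|_2\le j\delta$ with $\sum_{j=0}^{t-1}j=t(t-1)/2$. Your extra remark is also valid: if Lipschitz continuity is assumed only between consecutive iterates, telescoping the field values gives the same $jL_g\delta$ estimate and the same constant.
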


\noindent\textit{Proof.}
At $\vartheta_0$, the parenthesized expression equals
$\nabla\mathcal{J}_i^{\mathrm{SMAT}}(\vartheta_0)$ for $\lambda=1/t$.
The triangle inequality and Lipschitz condition bound the difference by
$t^{-1}L_g\sum_{j=0}^{t-1}\|\vartheta_j-\vartheta_0\|_2$, which is at most
$t^{-1}L_g\delta\sum_{j=0}^{t-1}j$.\hfill$\square$

At fixed parameters, unbiased minibatch and simulation gradients also give
the mixed-objective gradient in expectation when averaged over one cycle.
At changing parameters, the bound controls the population-gradient drift;
sampled gradients introduce additional variability. It does not imply
identical updates for stochastic or stateful optimizers.

\paragraph{Implementation details.}
The fused implementation regenerates the forward-pass mask for gradients
and restores original storage before the optimizer update, including any
weight decay.

%% file: sections/appendix/09_implementation_measurement.tex
\clearpage
\section{Implementation and Measurement Details}
\label{app:implementation-measurement}
\label{app:training-time}

\subsection{Hardware, Software, and Implementation}

All training runs use NVIDIA H800 GPUs. Llama-8B uses blockwise model
parallelism across two GPUs per expert while keeping the same effective batch
size and update budget; the other backbones use one GPU. FT--SMAT comparisons match GPU counts and software.

All training runs use Python~3.10.12, PyTorch~2.11.0+cu130, and CUDA~13.0.

SMAT uses the fused implementation from Section~\ref{sec:efficient-training}.
Original and simulated weights occupy separate storage; a reusable buffer
holds simulated weights, and frozen base weights remain in pinned CPU memory.
Fused kernels construct simulated weights and rescale gradients. The model uses simulated storage for the forward and backward pass,
then restores original weights for the optimizer update.

\input{tables/training_time}

\subsection{Training-Time Measurement}

Table~\ref{tab:training-time} reports absolute times for the experiments in
Tables~\ref{tab:llm-results} and~\ref{tab:vit-results}. We measure one full
training run per configuration. Each total is the sum of training-loop
wall-clock seconds across a complete expert set---seven language experts or
eight vision experts---rather than parallel elapsed time.

Mean time per step divides each total by all optimizer steps: 20,123 for a
language expert set and 32,000 for a vision expert set. Timing includes batch
loading, initial warmup, and logging. It excludes initialization, checkpoint
saving, hyperparameter search, merging, and evaluation.

\subsection{Memory Measurement}

For Tables~\ref{tab:llm-results} and~\ref{tab:vit-results}, \emph{Mem.} is
the largest per-GPU PyTorch allocated peak across experts, in GiB. Its
adjacent percentage is the pre-rounding change from FT within the same model
block. These peaks cover training and final checkpoint serialization.

For Muon, memory is the largest per-device PyTorch allocated peak across the
seven experts.
\FloatBarrier

%% file: tables/training_time.tex
\begin{table}[htbp]
\centering
\small
\caption{Training time for complete expert sets. Mean is seconds per optimizer
step; total is summed wall time in seconds.}
\label{tab:training-time}
\setlength{\tabcolsep}{3pt}
\begin{tabular*}{\linewidth}{@{\extracolsep{\fill}}lrrrrrrrr@{}}
\toprule
& \multicolumn{2}{c}{Llama-1B} & \multicolumn{2}{c}{Llama-8B}
& \multicolumn{2}{c}{ViT-B/32} & \multicolumn{2}{c}{ViT-L/14} \\
\cmidrule(lr){2-3}\cmidrule(lr){4-5}\cmidrule(lr){6-7}\cmidrule(l){8-9}
Method & Mean & Total & Mean & Total & Mean & Total & Mean & Total \\
\midrule
FT & 0.0687 & 1,382.1 & 0.3543 & 7,130.1 & 0.1255 & 4,015.3 & 1.4494 & 46,381.4 \\
ASAM & 0.1971 & 3,965.7 & 0.9481 & 19,079.0 & 0.2171 & 6,946.1 & 2.8991 & 92,771.3 \\
MergOPT & 0.1180 & 2,375.2 & 0.5469 & 11,005.0 & 0.1438 & 4,601.5 & 1.4862 & 47,559.9 \\
OrthoReg & 0.3910 & 7,868.5 & 4.1047 & 82,599.3 & 0.1411 & 4,515.7 & 1.5006 & 48,019.5 \\
SMAT & 0.0700 & 1,409.0 & 0.3594 & 7,231.2 & 0.1268 & 4,057.5 & 1.4516 & 46,451.0 \\
\bottomrule
\end{tabular*}
\end{table}

%% file: sections/appendix/10_future_work.tex
\section{Future Work}
\label{app:future-work}

SMAT currently assumes shared expert architectures and initialization.
Model-fusion surveys~\citep{cai2026parameters} and perspectives on accessible,
sustainable AI~\citep{zhou2025democratizing,zhou2026sustainable} motivate
connecting merge-aware training with the broader fusion pipeline.
The following combinations remain to be evaluated.

\paragraph{Scaling and adaptive training.}
Model merging scaling laws~\citep{wang2026scaling} motivate varying model
size, expert count, and domain diversity. MergePipe~\citep{wang2026mergepipe}
and Access Sets Matter~\citep{wang2026access} further suggest evaluating
SMAT under joint compute, memory, and checkpoint-access budgets.
Geometry Conflict~\citep{wang2026geometry} motivates adapting simulation to
update geometry or task history, while preserving independent training
and testing capability retention as experts arrive sequentially.

\paragraph{Calibration, compression, and heterogeneous fusion.}
Combining SMAT with FeatCal~\citep{gu2026featcal} and
E-PMQ~\citep{wang2026epmq} could test whether training-time compatibility
reduces calibration needs or improves low-bit deployment.
InfiFPO~\citep{gu2025infifpo} and InfiGFusion~\citep{wang2025infigfusion}
motivate extending fusion beyond compatible parameters through preference
optimization and distillation. Preference-Aligned Distillation~\citep{gu2025pad}
and Token Teachability~\citep{wang2026teachability} suggest examining which
teacher signals survive transfer. Evaluation should separate each stage's
contribution and report end-to-end cost and preference retention.

\paragraph{Structured knowledge and reliability.}
StructFact~\citep{huang2025structfact}, HyperG~\citep{huang2025hyperg},
the response-uncertainty evaluation of \citet{dang2025uncertainty}, and
NPCBench~\citep{wang2026npcbench} motivate testing structural reasoning,
robustness to misleading multimodal cues, and guideline adherence.
Such evaluations should report per-task regressions, uncertainty, and
constraint violations alongside mean accuracy before drawing conclusions
about deployment reliability.